\documentclass{article}
\usepackage{fullpage,url,hyperref}

\usepackage{graphicx} 
\usepackage{amsthm,amsmath,amssymb,bm,amsfonts,thmtools}
\usepackage{xcolor}
\usepackage{soul}
\usepackage{booktabs}
\usepackage{tabularx}
\usepackage{natbib}
\usepackage{hyperref}

\usepackage{algorithm}
\usepackage{algpseudocode}

\usepackage{makecell}

\usepackage{tikz}
\usetikzlibrary{bayesnet}
\usetikzlibrary{calc,arrows.meta}

\newcommand\numberthis{\addtocounter{equation}{1}\tag{\theequation}}

\DeclareMathOperator*{\argmin}{\arg\!\min}

\newtheorem{theorem}{Theorem}

\newtheorem{lemma}[theorem]{Lemma}
\newtheorem{proposition}[theorem]{Proposition}

\theoremstyle{definition}
\newtheorem{definition}[theorem]{Definition}
\newtheorem{assumption}[theorem]{Assumption}
\newtheorem{example}[theorem]{Example}

\providecommand{\keywords}[1]
{
  \small	
  \noindent
  \textbf{Keywords:} #1
}
 
\title{Generalising maximum mean discrepancy:\\ kernelised functional Bregman divergences}
\author{Russell Tsuchida 
\thanks{Department of Data Science and AI, Monash University, Australia.} 
\and 
Frank Nielsen
\thanks{Sony Computer Science Laboratories, Inc. Tokyo, Japan.}}

 \date{}

\begin{document}

\maketitle

\begin{abstract}
Bregman divergences play a pivotal role in statistics, machine learning and computational information geometry. 
Particularly in the context of machine learning, they are central to clustering, exponential families, parameter estimation and optimisation, among other things. 
Despite this,  the full toolkit of Hilbert spaces and in particular reproducing kernel Hilbert spaces have not been systematically developed and applied to \emph{functional Bregman divergences}, where points are functions rather than  finite-dimensional parameter vectors.  
While other types of functional Bregman divergences have been studied, these are typically in a Banach space rather than more directly aligned with kernel methods and Hilbert-space geometry commonly used in machine learning.
We consider functional Bregman divergences on a Hilbert space, where the self-dual pairing and Riesz representer afford us particularly convenient calculus. 
Further specialising Bregman generators as a composition involving a kernel mean embedding makes such divergences easy to estimate. 
We discuss applications in clustering, universal estimation, robust estimation and generative modelling, and contrast our approach with other types of Bregman divergences.
\end{abstract}

\keywords{Bregman divergence; metric; kernel methods; kernel mean embedding; maximum mean discrepancy; robust estimation}

\section{Introduction}
The classical Bregman divergence (BD) operating on finite-dimensional vectors is defined as the difference between an appropriately defined strictly convex function $\Phi$ evaluated at a point $a$ and the first order Taylor approximation of the function around a point $b$ evaluated at $a$~\citep{Bregman-1967,banerjee2005,nielsen2020elementary},
$$
d_{\Phi}(a,b) = \Phi(a) - \Phi(b) - \langle \nabla \Phi(b), a-b \rangle_{\mathbb{R}^d}.
$$
Such Bregman divergences offer helpful tools for constructing and analysing algorithms in machine learning and statistics. 
There is a unique BD associated to every sufficiently regular exponential family~\citep{banerjee2005,frongillo2014convex}, providing a link between statistics and geometry underlying the Bregman divergences. 
This link, as well as other nice properties, facilitates the use of BDs in a wide range of applications in statistics and machine learning, including density estimation~\citep{azoury2001relative}, boosting~\citep{collins2002logistic}, parameter estimation~\cite{barndorff2014information}, optimisation algorithms~\citep{ProximalGenBD-1997,beck2003mirror,raskutti2015information}, dimensionality reduction~\citep{collins2001generalization}, clustering~\citep{banerjee2005} and optimal transport~\citep{benamou2015iterative}, to name just a few. 

In many modern problems in machine learning, data points (models) may be better understood as functions (nonparametric) rather than finite dimensional vectors (parameters)~\citep{ScholkopfSmola2002,RasmussenWilliams2006,arbel2019maximum,knoblauch2019generalized,songscore,wild2022generalized}. 
This motivates the construction of BDs that operate on functions, along with a corresponding elucidation of their properties and how they mirror the finite dimensional setting. 

\subsection{Contributions} 

We summarise our contributions as follows.

\begin{itemize}
\item We construct a functional Bregman divergence (FBD), but on a Hilbert space $\mathcal{H}$ rather than a Banach space~\cite{bauschke2003bregman} or indeed a Lebesgue measure space $L^r(\nu)$~\cite{J-FunctionalBD-2008}. 
This construction is briefly given in~\cite[Exercise 17.8]{bauschke2020}, however its properties are not discussed at length. 
The Hilbert space affords us a convenient representation of the derivative via the Riesz representation theorem. This is our main distinction from~\cite{J-FunctionalBD-2008}. 

\item We describe some immediate standard properties (Proposition~\ref{prop:basic_properties}) of the functional BD, naturally extending the finite dimensional and pointwise BD~\cite{jones2002general}. 
These properties are nonnegativity and the identity of indiscernibles, convexity in the first argument, linearity in the generator and the three point identity.

\item We show some other more involved properties, and highlight some open  problems in properties which hold for the finite vector Bregman divergences but have not been extended to FBDs. 
For example, our FBD admits a bias-variance decomposition (Lemma~\ref{lem:bregman-decomp}) from which it easily follows that the mean is the minimiser of the FBD with respect to the second argument (Theorem~\ref{thm:min-risk-mean}), but it is not clear whether the converse result holds, as in the finite-dimensional case~\citep{banerjee2005optimality}. 
We can characterise the class of symmetric FBDs, but only for twice differentiable generators (Theorem~\ref{thm:symmetric}). 
We also provide a means of metricising the divergence (Theorem~\ref{thm:metricisation}).

\item Our FBD possesses a natural dually flat structure, under standard Legendre-type conditions on the generator. 
Implications of this include a dual coordinate system, dual divergences and quasi-arithmetic mean as left expected minimisers (Theorem~\ref{thm:reverse-risk-qmean}).

\item From the perspective of estimation of the FBD, a particularly convenient subfamily is a kernelised FBD (k-FBD). Making use of the kernel mean embedding, we describe this family (Proposition~\ref{prop:kfbd}). These k-FBDs are estimatable from data, and include as special cases the popular squared maximum mean discrepancy (MMD), a new deformed squared MMD, and formally a kernelised Kullback-Leibler (KL) divergence. 
The deformed squared MMD can be uniformly sandwiched between a squared MMD, up to easily computable multiplicative constants.

\item We discuss some potential applications of k-FBDs. These include universal and robust estimation, where deformed squared MMD inherits similar properties as the MMD~\citep{cherief2022finite}.
\end{itemize}

\begin{figure}[t]
\centering
\includegraphics[scale=0.8]{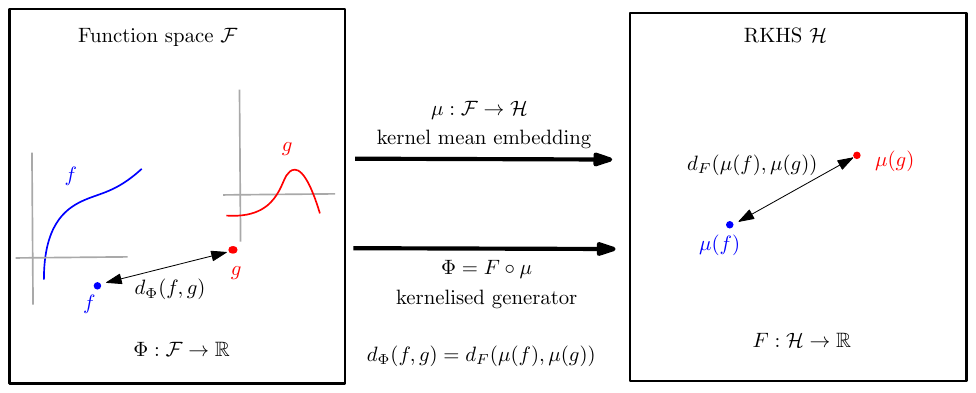}
\caption{Schematic view of a kernelised functional Bregman divergence. Functions $f,g\in\mathcal{F}$ are mapped by the kernel mean embedding $\mu$ into an RKHS $\mathcal{H}$. When the generator on $\mathcal{F}$ factors as $\Phi = F\circ \mu$, the resulting kernelised functional Bregman divergence can be understood directly as a divergence on the kernel mean embeddings, $d_{\Phi}(f,g)=d_{F}(\mu(f),\mu(g))$. See Proposition~\ref{prop:kfbd}.}
\label{fig:kfbd_schematic}
\end{figure}

\newcolumntype{Y}{>{\centering\arraybackslash}X}
\begin{table}[]
    \centering
    \begin{tabularx}{\textwidth}{c|Y|Y|Y|Y}
         & Vector Bregman divergence & Pointwise Bregman divergence & Functional Bregman divergence ($L^s(\nu)$) & Functional Bregman divergence ($\mathcal{H}$) \\
         & \cite{Bregman-1967} & \cite{jones2002general} & \cite{J-FunctionalBD-2008} & This paper \\
         \hline
         Basic properties (see Proposition~\ref{prop:basic_properties}) & \checkmark & \checkmark & \checkmark & \checkmark \\
         Bias-variance decomposition & \checkmark & \checkmark & $\circ$ & \checkmark~\eqref{eq:bias_variance} \\
         Bregman $\implies$ Mean-as-minimiser & \checkmark & \checkmark & \checkmark & \checkmark~\eqref{eq:meanasmin} \\
         Mean-as-minimiser $\implies$ Bregman & \checkmark \cite{banerjee2005optimality} & \checkmark & ? & ? \\
        Characterisation of symmetric divergence & \checkmark & \checkmark & ? & \checkmark Theorem~\ref{thm:symmetric} \\ 
        Metricisation & \checkmark \cite{acharyya2013bregman} & ? & ? & \checkmark~\eqref{eq:gsb} 
    \end{tabularx}
    \caption{Comparison of known properties of our functional Bregman divergences with other types of Bregman divergences. \textbf{Legend:} \checkmark Known and explicitly shown. $\circ$ Implicit or likely true (perhaps given the results shown here) but not shown. ? Open question. We remark that concerning metricisation, the proof of~\cite{acharyya2013bregman} nor our proof extend obviously to the FBD on $L^s(\nu)$, since they rely heavily on an inner product structure, and suspect that the result does not generalise to this setting. The characterisation of the symmetric divergence result in Theorem~\ref{thm:metricisation} relies on a strong hypothesis that the generator is twice differentiable, unlike for the vector BD and pointwise BD results.}
    \label{tab:placeholder}
\end{table}

\section{Background}

\subsection{Generator}
Consider a real Hilbert space $\mathcal{F}$ of functions $f:\mathcal{X} \to \mathbb{R}$, where $\mathcal{X} \subseteq \mathbb{R}^d$. 
Throughout this paper, we will consider functionals  $\Phi:\mathcal{F} \to \mathbb{R}$ called \emph{generators} satisfying some basic conditions. 
Note that these conditions are sometimes stronger than strictly necessary, but for compactness of presentation we use these broader conditions. 
\begin{definition}
    \label{def:generator}
    A function $\Phi:\mathcal{F}\to(-\infty,\infty]$ is called a generator if it is proper, lower semicontinuous (lsc.), strictly convex and Fr\'echet differentiable on $\mathcal{A} = \text{int}(\text{dom} \, \Phi) \neq \emptyset$.
\end{definition}

\subsection{Duality}
\label{sec:Duality}
Some of our optional results concern a dual structure induced by the generator. 
For such dual structures, we require additional conditions on the generator, similar to the finite dimensional vector case~\citep{LegendreType-1967}.
\begin{definition}
\label{def:legendre}
    If a generator $\Phi$ as in Definition~\ref{def:generator} additionally is essentially smooth and essentially strictly convex, then it is said to be of Legendre type. 
\end{definition}
Since a real Hilbert space $\mathcal{F}$ is isomorphic to its own dual $\mathcal{F}^\ast$, the canonical dual pairing may be understood as the inner product itself: by the Riesz representer theorem, for every $g^\ast \in \mathcal{F}^\ast$ there exists a unique $g\in\mathcal{F}$ such that $g^\ast(f) = \langle g, f \rangle_{\mathcal{F}}$ for all $f \in \mathcal{F}$.
We therefore identify the dual space of $\mathcal{F}$ as $\mathcal{F}$ itself rather than $\mathcal{F}^\ast$, and directly consider $g$ rather than the corresponding $g^\ast$. 
This leads to a simplified presentation of a convex conjugate involving only an inner product rather than a dual pairing. 
\begin{definition}[Definition 13.1 of~\cite{bauschke2020}]
\label{def:conjujgate}
The convex conjugate $\Phi^\ast:\mathcal{F}\to(-\infty, \infty]$ of (a not necessarily Legendre type generator)  $\Phi:\mathcal{F}\to(-\infty,\infty]$ is defined by
\begin{align*}
    \Phi^\ast(g^\ast) = \sup_{f \in \mathcal{F}} \langle g^\ast, f \rangle_{\mathcal{F}} - \Phi(f)
\end{align*}
and the biconjugate of $\Phi$ is $\Phi^{\ast\ast}=(\Phi^\ast)^\ast$.
\end{definition}
We need an appropriate notion of gradients $\nabla \Phi$ of functionals $\Phi$ on $\mathcal{F}$:
We use the Fr\'echet derivative; see Appendix~\ref{app:frechet} for more detail.
When $\Phi$ is a Legendre type generator, a maximiser to the problem in Definition~\ref{def:conjujgate} exists and it is unique. 
The first order gradient condition (Proposition 17.4 of~\cite{bauschke2020}) together with the Riesz representer theorem says that the maximiser $f'$ satisfies
$D\Phi[f';h] = \langle \nabla \Phi(f') , h \rangle_{\mathcal{F}} = \langle g^\ast, h\rangle_{\mathcal{F}}$ for test functions $h \in \mathcal{F}$. 
Convex conjugate functionals satisfy a Fenchel-Young inequality. 
\begin{proposition}[Proposition 13.15 and (implication of) Proposition 16.10 of~\cite{bauschke2020}]
    Let $\Phi:\mathcal{F} \to (\infty,\infty]$ be proper. Then for all $f,g \in \mathcal{H}$
    $$\Phi(f) + \Phi^\ast(g) \geq \langle f, g \rangle_{\mathcal{F}}.$$
    Moreover, suppose $\Phi$ is Fr\'echet differentiable. 
    Then equality is obtained if and only if $g = \nabla \Phi(f)$, which implies $f = \nabla \Phi^\ast(g)$.
\end{proposition}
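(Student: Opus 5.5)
The inequality will come straight from the definition of $\Phi^\ast$ in Definition~\ref{def:conjujgate}. Fix $f,g$. If $\Phi(f)=+\infty$ the inequality holds trivially, because properness gives $\Phi^\ast(g) > -\infty$: pick any $f_0\in\operatorname{dom}\Phi$, so $\Phi^\ast(g)\ge\langle g,f_0\rangle_{\mathcal F}-\Phi(f_0)>-\infty$. Otherwise, taking $f$ as a candidate in the supremum gives $\Phi^\ast(g)\ge \langle g,f\rangle_{\mathcal F}-\Phi(f)$, and rearranging yields $\Phi(f)+\Phi^\ast(g)\ge\langle f,g\rangle_{\mathcal F}$. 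Properness is needed only so that no $\infty-\infty$ expression appears.

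For the equality case, assume $\Phi$ is convex, as it is for every generator in Definition~\ref{def:generator}, and Fréchet differentiable at $f$. Equality holds exactly when $f$ attains the supremum in Definition~\ref{def:conjujgate}, that is, when $f$ minimises the convex functional $h\mapsto \Phi(h)-\langle g,h\rangle_{\mathcal F}$. This functional is Fréchet differentiable at $f$ with gradient $\nabla\Phi(f)-g$, where the gradient is identified through the Riesz representer as in Section~\ref{sec:Duality}.

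For a convex differentiable functional, $f$ is a global minimiser if and only if its gradient vanishes there (Proposition 17.4 of \cite{bauschke2020}). Necessity is the usual first-order condition along directions $f+th$. Sufficiency follows from the gradient inequality $\Phi(h)\ge\Phi(f)+\langle\nabla\Phi(f),h-f\rangle_{\mathcal F}$, which itself comes from convexity of $t\mapsto\Phi(f+t(h-f))$. Hence equality holds if and only if $g=\nabla\Phi(f)$.

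For the final implication, assume $g=\nabla\Phi(f)$. Then $f\in\partial\Phi^\ast(g)$. To see this, note that for any $g'$ the definition of $\Phi^\ast$ gives $\Phi^\ast(g')\ge\langle g',f\rangle_{\mathcal F}-\Phi(f)=\Phi^\ast(g)+\langle g'-g,f\rangle_{\mathcal F}$, where the equality case was used for the last step. This is the subgradient inequality for $\Phi^\ast$ at $g$.

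The main obstacle is upgrading $f\in\partial\Phi^\ast(g)$ to $f=\nabla\Phi^\ast(g)$. This needs $\Phi^\ast$ to be differentiable at $g$, or equivalently $\partial\Phi^\ast(g)$ to be a singleton. I would argue as follows. Since $\Phi$ is proper, lsc and convex, $\Phi^{\ast\ast}=\Phi$ by Fenchel–Moreau, so $f'\in\partial\Phi^\ast(g)$ is equivalent to $g\in\partial\Phi(f')$. Strict convexity of $\Phi$ then forces $f'=f$: if $g\in\partial\Phi(f)\cap\partial\Phi(f')$ with $f\ne f'$, then $\Phi$ would be affine on the segment $[f,f']$, contradicting strict convexity. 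This gives single-valuedness of $\partial\Phi^\ast(g)$. Passing from single-valuedness to genuine differentiability, so that $\nabla\Phi^\ast(g)$ is meaningful, is the delicate point in infinite dimensions. I would invoke the Legendre-type hypothesis (Definition~\ref{def:legendre}) and the corresponding results in \cite{bauschke2020}. With those, $\nabla\Phi^\ast=(\nabla\Phi)^{-1}$ on the interior of the domain, and $f=\nabla\Phi^\ast(g)$ follows.
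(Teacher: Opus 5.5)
Your argument is correct and follows the route behind the results the paper only cites: the inequality comes directly from the definition of $\Phi^\ast$ (Proposition~13.15 of \cite{bauschke2020}); equality is attainment of the supremum, i.e.\ $g\in\partial\Phi(f)=\{\nabla\Phi(f)\}$ for convex differentiable $\Phi$; and the one-line subgradient inequality gives $f\in\partial\Phi^\ast(g)$, which is exactly the implication in Proposition~16.10. The hypotheses you add are genuinely needed and are not a weakness of your proof: without convexity the equality claim fails (for $\Phi=-\Vert\cdot\Vert_{\mathcal F}^2$ one has $\Phi^\ast\equiv+\infty$), and the citation only delivers $f\in\partial\Phi^\ast(g)$, a singleton as you rightly show, while $\Phi^\ast$ need not be differentiable at $g$ even for a generator (on infinite-dimensional $\mathcal F$, $\Phi(f)=\tfrac12\langle f,Tf\rangle_{\mathcal F}$ with $T$ compact, injective and positive has $\operatorname{dom}\Phi^\ast=\operatorname{ran}T^{1/2}$, a subspace with empty interior), so invoking the Legendre-type assumption for the last step is the right fix.
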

We remark that it is possible to define a functional Fenchel-Young divergence through the gap $\Phi(f) + \Phi^\ast(g) - \langle f, g\rangle_{\mathcal{F}} \geq 0$, extending the finite vector case~\citep{blondel2020learning}, but we do not further pursue that direction here.
BDs can be expressed using mixed dual parameterisations as equivalent Fenchel-Young divergences which are used in ML in the scalar case as Fenchel-Young losses.
Convex conjugates also satisfy a biconjugate property, known as a Fenchel-Moreau theorem. 
\begin{proposition}[Theorem 13.37 of~\cite{bauschke2020}]
    Let $\Phi:\mathcal{F}\to(-\infty,\infty]$ be proper. Then $\Phi$ is lower semicontinuous and convex if and only if $\Phi=(\Phi^\ast)^\ast$. In this case, $\Phi^\ast$ is proper as well.
\end{proposition}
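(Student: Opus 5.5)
The plan has two parts: the forward implication, and the converse from the biconjugate inequality.

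\emph{Forward direction.} Suppose $\Phi=(\Phi^\ast)^\ast$. I will show that any conjugate $\Psi^\ast$ of a function $\Psi$ on $\mathcal{F}$ is automatically convex and lower semicontinuous. This holds because $\Psi^\ast(g)=\sup_{f\in\operatorname{dom}\Psi}\bigl(\langle g,f\rangle_{\mathcal{F}}-\Psi(f)\bigr)$ is a pointwise supremum of continuous affine functions of $g$. Its epigraph is therefore an intersection of closed half-spaces, hence closed and convex. Taking $\Psi=\Phi^\ast$ gives that $\Phi$ is lsc.\ and convex.

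\emph{Properness of $\Phi^\ast$.} Since $\Phi$ is proper, some $f_0$ has $\Phi(f_0)<\infty$, so $\Phi^\ast(g)\geq\langle g,f_0\rangle-\Phi(f_0)>-\infty$ for every $g$. It remains to show $\Phi^\ast\not\equiv+\infty$. If $\Phi^\ast\equiv+\infty$, then $\Phi^{\ast\ast}\equiv-\infty$, contradicting $\Phi=\Phi^{\ast\ast}$ together with properness of $\Phi$. So in this direction properness of $\Phi^\ast$ is immediate.

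\emph{Converse.} Suppose $\Phi$ is proper, lsc.\ and convex. The Fenchel--Young inequality stated above gives $\Phi(f)\geq\langle f,g\rangle-\Phi^\ast(g)$ for every $g$. Taking the supremum over $g$ yields $\Phi^{\ast\ast}\leq\Phi$. For the reverse inequality, fix $f_0$ and a real $\alpha<\Phi(f_0)$. I will produce $g$ with $\langle f_0,g\rangle-\Phi^\ast(g)\geq\alpha$, i.e.\ a continuous affine minorant of $\Phi$ whose value at $f_0$ exceeds $\alpha$. The key step is to separate the point $(f_0,\alpha)$ from the epigraph $\operatorname{epi}\Phi\subset\mathcal{F}\times\mathbb{R}$. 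This epigraph is nonempty, closed (by lsc.) and convex, so it suffices to project $(f_0,\alpha)$ onto it in the Hilbert space $\mathcal{F}\times\mathbb{R}$. The projection characterisation then yields $(u,\beta)\neq 0$ and $\gamma$ with
\[\langle u,f\rangle+\beta t\leq\gamma<\langle u,f_0\rangle+\beta\alpha\quad\text{for all }(f,t)\in\operatorname{epi}\Phi.\]
Since $t$ can be sent to $+\infty$, we get $\beta\leq 0$.

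\emph{Main obstacle: the vertical case $\beta=0$.} If $\beta<0$, rescale to $\beta=-1$. The inequality then says $f\mapsto\langle u,f\rangle-\gamma$ is an affine minorant of $\Phi$ exceeding $\alpha$ at $f_0$, and $g=u$ works.

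If $\beta=0$ (only possible when $f_0\notin\operatorname{dom}\Phi$), the hyperplane is vertical. I will first apply the same separation at a point of $\operatorname{dom}\Phi$; that separation cannot be vertical, so it furnishes some affine minorant $a(f)=\langle v,f\rangle-c$ of $\Phi$. In particular $\Phi^\ast$ is proper, which settles that claim in this direction. I then add a large multiple of the vertical separator: set $a_\lambda(f)=a(f)+\lambda(\langle u,f\rangle-\gamma)$ with $\lambda>0$. On $\operatorname{dom}\Phi$ we have $\langle u,f\rangle-\gamma\leq 0$, so $a_\lambda\leq\Phi$ everywhere. At $f_0$, however, $\langle u,f_0\rangle-\gamma>0$, so $a_\lambda(f_0)\to\infty$ as $\lambda\to\infty$ and eventually exceeds $\alpha$. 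With $g=v+\lambda u$ this gives $\Phi^{\ast\ast}(f_0)\geq\alpha$.

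Letting $\alpha\uparrow\Phi(f_0)$ gives $\Phi^{\ast\ast}=\Phi$.
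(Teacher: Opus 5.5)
The paper gives no proof of its own and simply cites Bauschke--Combettes, Theorem 13.37. Your argument is correct and complete, and it is essentially the standard proof given there. The forward direction uses that a conjugate is a supremum of continuous affine functions. The converse projects $(f_0,\alpha)$ onto the closed convex epigraph, and handles the vertical case ($\beta=0$, which can only occur off $\operatorname{dom}\Phi$) by tilting an affine minorant that comes from a non-vertical separation at a point of $\operatorname{dom}\Phi$.
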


\subsection{Functional Bregman Divergence}

We now introduce the functional Bregman divergence (FBD). 
The definition below actually extends to Gateaux derivatives, but here we consider only Fr\'echet derivatives (see Appendix~\ref{app:frechet}). 
\begin{definition}[Implied by Exercise 17.8 of~\cite{bauschke2020}]
\label{def:fbd}
    Let $\Phi:\mathcal{F} \to (-\infty,\infty]$ be a a generator (see Definition~\ref{def:generator}) on $\mathcal{F}$.  
    The \emph{functional Bregman divergence} (FBD) $d_\Phi:\mathcal{A}\times\mathcal{A}\to(-\infty,\infty)$ generated by $\Phi$ is defined by
    \begin{align*}
        d_\Phi(f,g) = \Phi(f) - \Phi(g) - \langle \nabla \Phi(g),  f-g \rangle_{\mathcal{F}},
    \end{align*}
    where $\nabla \Phi(g)$ is the Fr\'echet gradient of $\Phi$ at $g$.
\end{definition}
We remark that it is in principle also possible to construct a Jensen-Bregman divergence (as in the finite vector case~\citep{nielsen2011burbea}), a type of symmetrisation of the Bregman divergence. 
While we do not investigate functional Jensen-Bregman divergences here, we later consider a symmetricisation and metricisation of the FBD in Theorem~\ref{thm:metricisation}.

\subsection{RKHS, MMD and KME}
\paragraph{RKHS.} 
We make use of a reproducing kernel Hilbert space (RKHS) $\mathcal{H}$, different from the Hilbert space $\mathcal{F}$.
An RKHS $\mathcal{H}$ on $\mathcal{X} \subseteq \mathbb{R}^d$ is a Hilbert space of functions $f:(\mathcal{X}\subseteq \mathbb{R}^d) \to \mathbb{R}$ characterised by a positive semidefinite kernel $k:\mathcal{X}\times \mathcal{X} \to \mathbb{R}$ such that the reproducing / evaluation property holds: $\langle f, k(x, \cdot) \rangle_{\mathcal{H}} = f(x)$, for any $f \in \mathcal{H}$ and $x \in \mathcal{X}$. (Equivalently, a Hilbert space $\mathcal{H}$ on $\mathcal{X}$ is an RKHS if for every $x \in \mathcal{X}$, the evaluation functional $L_x:\mathcal{H} \to \mathbb{R}$ defined by $L_x(f) = f(x)$ is continuous in $f \in \mathcal{H}$ (i.e. $L_x$ is a bounded linear operator)).

\paragraph{Maximum mean discrepancy and kernel mean embedding.}
The integral probability metric
\begin{align*}
    \mathrm{MMD}_k(f,g) = \sup_{f \in \mathcal{H}\, \Vert f \Vert_{\mathcal{H}}\leq 1} \mathbb{E}_{x\sim f, y \sim g} f(x) - f(y) 
\end{align*}
is called the \emph{maximum mean discrepancy} (MMD)~\citep{gretton2006kernel, gretton2012kernel, sriperumbudur2010hilbert}.
Using the Cauchy Schwarz inequality and the reproducing property, one can show that the squared MMD can be written as
\begin{align*}
    \text{MMD}^2_k(f,g) = \Vert \mu(f) - \mu(g) \Vert_{\mathcal{H}}^2 = \mathbb{E} [k(x, x') - 2 k(x,y) + k(y,y')], \qquad x,x' \stackrel{iid}{\sim} f, \quad y,y' \stackrel{iid}{\sim} g, \numberthis \label{eq:mmd}
\end{align*}
where $\mu:\mathcal{F}\to\mathcal{H}$ and $\mu(f) \in \mathcal{H}$ is the kernel mean embedding (KME) of $f$, given by $\mu(f) = \mathbb{E}_{x\sim p} [k(x,\cdot)]$, provided that $\Vert \mu(f) \Vert_{\mathcal{H}} < \infty$.

\section{Properties of the functional Bregman divergence}
For the remainder of the paper, we defer all long proofs to Appendix~\ref{app:proofs}.
\subsection{Basic properties}
We first show some basic properties concerning the FBD, which naturally extend the BD in finite dimensions. 
\begin{restatable}{proposition}{propbasic}
\label{prop:basic_properties}
The FBD $d_{\Phi}$ in Definition~\ref{def:fbd} satisfies the following basic properties:

\begin{enumerate}
    \item \textbf{Nonnegativity and identity of indiscernibles.} For all $f,g \in \mathcal{A}$ the divergence is nonnegative $d_\Phi[f,g] \geq 0$ and $d_\Phi[f,g]= 0$ if and only if $f=g$.
    \item \textbf{Convex in the first argument.} For fixed $q$, the function $d_{\Phi}(\cdot,q):\mathcal{F} \to [0,\infty)$ is strictly convex.
    \item \textbf{Linearity in generator.} For any two generators (see Definition~\ref{def:fbd})  $\Phi_1:\mathcal{F} \to \mathbb{R}$ and $\Phi_2:\mathcal{F} \to \mathbb{R}$ and $\lambda \geq 0$, we have that $d_{\Phi_1 + \lambda \Phi_2} = d_{\Phi_1} + \lambda d_{\Phi_2} $. 
    \item \textbf{Three point identity.} $ d_{\Phi}(f,g) = d_{\Phi}(f,h) + d_{\Phi}(h,g) - \langle \nabla\Phi(g) - \nabla\Phi(h), f - h  \rangle_{\mathcal{F}}$.
\end{enumerate}
\end{restatable}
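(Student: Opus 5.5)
We prove the four items separately, all directly from Definition~\ref{def:fbd} together with standard facts about Fr\'echet differentiable convex functionals on a Hilbert space.

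\textbf{Nonnegativity and identity of indiscernibles.} The key tool is the gradient inequality for differentiable convex functionals: if $\Phi$ is convex and Fr\'echet differentiable at $g\in\mathcal{A}$, then $\Phi(f)\geq \Phi(g)+\langle \nabla\Phi(g), f-g\rangle_{\mathcal{F}}$ for all $f$. To obtain it, apply convexity along the segment $g+t(f-g)$ for $t\in(0,1]$:
\[
\Phi(g+t(f-g))-\Phi(g)\leq t\bigl(\Phi(f)-\Phi(g)\bigr).
\]
Divide by $t$ and let $t\downarrow 0$. The limit of the left-hand side is the directional derivative $D\Phi[g;f-g]=\langle\nabla\Phi(g),f-g\rangle_{\mathcal{F}}$ by the Riesz representation. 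This gives $d_\Phi(f,g)\geq 0$.

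For the strict part, suppose $f\neq g$ and set $m=(f+g)/2\in\mathcal{A}$, using convexity of $\mathcal{A}$. Strict convexity gives $\Phi(m)<\tfrac12(\Phi(f)+\Phi(g))$. The gradient inequality at $g$ applied to $m$ gives $\Phi(m)\geq \Phi(g)+\tfrac12\langle\nabla\Phi(g),f-g\rangle_{\mathcal{F}}$. Combining the two yields $\tfrac12 d_\Phi(f,g)>0$. Conversely, $d_\Phi(g,g)=0$ trivially. This step is the only one with any real content, and the main care is in justifying the limit, i.e.\ that the Fr\'echet derivative coincides with the one-sided directional derivative.

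\textbf{Convexity in the first argument.} For fixed $q$, the map $f\mapsto d_\Phi(f,q)$ equals $\Phi(f)$ plus an affine functional of $f$, namely $-\Phi(q)-\langle\nabla\Phi(q),f-q\rangle_{\mathcal{F}}$. A strictly convex function plus an affine function is strictly convex, since the affine part satisfies the convexity inequality with equality.

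\textbf{Linearity in the generator.} Fr\'echet differentiation is linear, so $\nabla(\Phi_1+\lambda\Phi_2)=\nabla\Phi_1+\lambda\nabla\Phi_2$. Substituting this into the definition and using linearity of the inner product in its first slot gives the identity. We also note that $\Phi_1+\lambda\Phi_2$ is again a generator: properness, lower semicontinuity and strict convexity are preserved for $\lambda\geq0$.

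\textbf{Three point identity.} Expand both sides using the definition. On the right-hand side, the $\Phi(h)$ terms cancel. The gradient terms collect as
\[
-\langle\nabla\Phi(h),f-h\rangle-\langle\nabla\Phi(g),h-g\rangle-\langle\nabla\Phi(g),f-h\rangle+\langle\nabla\Phi(h),f-h\rangle=-\langle\nabla\Phi(g),f-g\rangle,
\]
which leaves exactly $d_\Phi(f,g)$. This step is a purely algebraic verification.
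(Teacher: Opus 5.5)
Your proposal is correct and follows essentially the same route as the paper. In item 1 both arguments restrict $\Phi$ to the segment from $g$ to $f$, use the tangent inequality for nonnegativity, and use strict convexity at an interior point of the segment for strictness; you take the midpoint and get the tangent inequality from the one-sided difference quotient, whereas the paper uses the mean value theorem and argues by contradiction at a general $t$. Items 2--4 match the paper's observation that $d_\Phi(\cdot,q)$ is $\Phi$ plus an affine term, and its direct algebraic expansions.
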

The FBD also has a dual divergence, making use of the Legendre duality introduced in \S~\ref{sec:Duality}. 
This duality follows the same style as in the finite vector Bregman case. 
\begin{theorem}
\label{thm:dual-bregman}
Let $\Phi$ be a Legendre-type generator. 
Then for all $f,g\in \mathcal{A}$,
\[
d_\Phi(g,f)\;=\; d_{\Phi^\ast}\bigl(\nabla\Phi(f),\,\nabla\Phi(g)\bigr).
\]
\end{theorem}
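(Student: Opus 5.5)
The plan is to reduce both sides to the Fenchel--Young equality case and compare them term by term, exactly as in the finite-dimensional proof. Write $f^\ast = \nabla\Phi(f)$ and $g^\ast = \nabla\Phi(g)$. For $f \in \mathcal{A}$, the Fenchel--Young proposition (equality case, $\Phi$ Fr\'echet differentiable) gives
\[
\Phi^\ast(f^\ast) = \langle f, f^\ast\rangle_{\mathcal{F}} - \Phi(f),
\qquad
\nabla\Phi^\ast(f^\ast) = f,
\]
and likewise for $g$. Before using these, I must check that $f^\ast, g^\ast$ lie in $\operatorname{int}(\operatorname{dom}\Phi^\ast)$ and that $\Phi^\ast$ is Fr\'echet differentiable there; otherwise $d_{\Phi^\ast}$ is not defined at these points.

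I expect that domain/differentiability check to be the main obstacle, because it is where the Legendre-type hypothesis is actually used. I would first appeal to the Hilbert-space Legendre theory in \cite{bauschke2020} (the infinite-dimensional analogue of Rockafellar's Theorem 26.5). It should give that $\Phi$ Legendre implies $\Phi^\ast$ Legendre (using Fenchel--Moreau, since $\Phi$ is proper, lsc and convex) and that $\nabla\Phi:\mathcal{A}\to\operatorname{int}(\operatorname{dom}\Phi^\ast)$ is a bijection with inverse $\nabla\Phi^\ast$. If only Gateaux differentiability of $\Phi^\ast$ is available, I would note, as the paper remarks after Definition~\ref{def:fbd}, that the definition of the FBD works equally well with Gateaux gradients. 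In that case the identity still holds with that interpretation.

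With the substitutions in hand, the proof is direct computation. Expand
\[
d_{\Phi^\ast}(f^\ast,g^\ast) = \Phi^\ast(f^\ast) - \Phi^\ast(g^\ast) - \langle \nabla\Phi^\ast(g^\ast), f^\ast - g^\ast\rangle_{\mathcal{F}},
\]
and replace $\Phi^\ast(f^\ast)$, $\Phi^\ast(g^\ast)$ and $\nabla\Phi^\ast(g^\ast)=g$ using the equalities above. This yields
\[
\langle f,f^\ast\rangle - \Phi(f) - \langle g,g^\ast\rangle + \Phi(g) - \langle g, f^\ast\rangle + \langle g, g^\ast\rangle .
\]
The $\langle g,g^\ast\rangle$ terms cancel, leaving
\[
\Phi(g) - \Phi(f) - \langle f^\ast, g - f\rangle_{\mathcal{F}},
\]
which is exactly $d_\Phi(g,f)$ by Definition~\ref{def:fbd}, since $f^\ast = \nabla\Phi(f)$.
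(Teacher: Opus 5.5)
Your proposal is correct and follows essentially the same route as the paper: you substitute the Fenchel--Young equalities $\Phi^\ast(\nabla\Phi(f))=\langle f,\nabla\Phi(f)\rangle_{\mathcal F}-\Phi(f)$ (and likewise at $g$) together with $\nabla\Phi^\ast(\nabla\Phi(g))=g$, then cancel terms; the only cosmetic difference is that you expand from the dual side rather than the primal side. Your added check that $\nabla\Phi(f),\nabla\Phi(g)$ lie in $\operatorname{int}(\operatorname{dom}\Phi^\ast)$, where $\Phi^\ast$ is differentiable (possibly only in the Gateaux sense), addresses a point the paper leaves implicit and is a worthwhile addition.
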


\begin{proof}
Fix $f,g\in \mathcal{A}$ and write $u=\nabla\Phi(f)$, $v=\nabla\Phi(g)$.
By the Fenchel--Young equality at $f$ and $g$,
$\Phi(f)=\langle f,u\rangle_{\mathcal F}-\Phi^\ast(u)$ and $\Phi(g)=\langle g,v\rangle_{\mathcal F}-\Phi^\ast(v)$.
Substitute these into $d_\Phi(g,f)=\Phi(g)-\Phi(f)-\langle \nabla \Phi(f),g-f\rangle_{\mathcal F}$ and simplify to get
$d_\Phi(g,f)=\Phi^\ast(u)-\Phi^\ast(v)-\langle g,u-v\rangle_{\mathcal F}$.
Using $\nabla\Phi^\ast(v)=g$, this is exactly $d_{\Phi^\ast}(u,v)$.
\end{proof}

\subsection{Bias-variance decomposition and the mean-as-minimiser}
In order to prove the mean-as-minimiser property, we take an alternative route to~\cite{J-FunctionalBD-2008}, who proceed with calculus. 
We do so through the bias-variance decomposition, which is an interesting property in its own right. 
In order to formulate the idea of a mean, we first need a notion of random functions and integration of such functions. 
For this, as in~\cite{J-FunctionalBD-2008}, we use the Bochner mean (see Appendix~\ref{app:bochner} for a very brief overview). 
\begin{lemma}
\label{lem:bregman-decomp}
Let $\mathcal{U}\subset \mathcal F$ be a nonempty convex set and let $\Phi$ be a generator. 
Let $F$ be an $\mathcal F$-valued random element such that $\mathbb P(F\in \mathcal{U})=1$ and
$\mathbb E\|F\|_{\mathcal F}<\infty$. Assume $\mathbb E|\Phi(F)|<\infty$ and that the Bochner mean
$\bar f=\mathbb E[F]$ belongs to $\mathcal{U}$.
Then for every $g\in \mathcal{U}$ for which $\mathbb E[d_\Phi(F,g)]$ is finite,
\begin{equation}
\mathbb E\bigl[d_\Phi(F,g)\bigr]
=
\mathbb E\bigl[d_\Phi(F,\bar f)\bigr] + d_\Phi(\bar f,g). \label{eq:bias_variance}
\end{equation}
\end{lemma}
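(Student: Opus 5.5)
The plan is to apply the three point identity of Proposition~\ref{prop:basic_properties} pointwise with $h=\bar f$, and then take expectations. For each realisation of $F$ it gives
\begin{align*}
d_\Phi(F,g) = d_\Phi(F,\bar f) + d_\Phi(\bar f,g) - \langle \nabla\Phi(g)-\nabla\Phi(\bar f),\, F-\bar f\rangle_{\mathcal F}.
\end{align*}
This can also be checked directly from Definition~\ref{def:fbd}: the $\Phi(F)$ terms cancel, and what remains is $-\Phi(g)+\Phi(\bar f)-\langle\nabla\Phi(g),F-g\rangle_{\mathcal F}+\langle\nabla\Phi(\bar f),F-\bar f\rangle_{\mathcal F}$, which regroups into the stated form. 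The identity needs $\nabla\Phi(\bar f)$ and $\nabla\Phi(g)$ to exist. I will therefore assume, as is implicit in Definition~\ref{def:fbd}, that $\mathcal U\subseteq\mathcal A$, so that $g$, $\bar f$ and (almost surely) $F$ lie in the domain where $d_\Phi$ is defined.

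Next I take expectations of each term. The cross term is a continuous linear functional $\ell(\cdot)=\langle w,\cdot\rangle_{\mathcal F}$ with $w=\nabla\Phi(g)-\nabla\Phi(\bar f)\in\mathcal F$, applied to $F-\bar f$. Since $\mathbb E\|F\|_{\mathcal F}<\infty$, $F$ is Bochner integrable. Bounded linear functionals commute with the Bochner integral (Appendix~\ref{app:bochner}), so $\mathbb E\langle w,F-\bar f\rangle_{\mathcal F}=\langle w,\mathbb E F-\bar f\rangle_{\mathcal F}=0$. The term $d_\Phi(\bar f,g)$ is deterministic and finite.

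The main obstacle is integrability: linearity of expectation needs every term to be integrable, not just the left-hand side. Finiteness of $\mathbb E[d_\Phi(F,g)]$ is given. The cross term is integrable because $|\langle w,F-\bar f\rangle_{\mathcal F}|\le\|w\|_{\mathcal F}(\|F\|_{\mathcal F}+\|\bar f\|_{\mathcal F})$. The remaining term $d_\Phi(F,\bar f)=\Phi(F)-\Phi(\bar f)-\langle\nabla\Phi(\bar f),F-\bar f\rangle_{\mathcal F}$ is integrable because $\mathbb E|\Phi(F)|<\infty$ and its inner-product part obeys the same linear bound; this is exactly where the hypothesis on $\Phi(F)$ is used. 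So $d_\Phi(F,\bar f)$ can be written as the difference of the integrable quantities $d_\Phi(F,g)$, $d_\Phi(\bar f,g)$ and the cross term, and rearranging the expectations gives \eqref{eq:bias_variance}. Finally, I will note that measurability of $\Phi(F)$ and $d_\Phi(F,g)$ follows from lower semicontinuity of $\Phi$ (hence Borel measurability) and continuity of the inner product.
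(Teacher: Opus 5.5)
Your proposal is correct and is essentially the paper's own argument. The paper expands $\mathbb E[d_\Phi(F,g)]-\mathbb E[d_\Phi(F,\bar f)]$ from the definition, cancels $\Phi(F)$, and uses $\mathbb E[F]=\bar f$ to remove $\langle\nabla\Phi(\bar f),\mathbb E[F]-\bar f\rangle_{\mathcal F}$; this is the same algebra as your pointwise three-point identity with $h=\bar f$ followed by linearity of the Bochner mean, and your integrability bookkeeping and the assumption $\mathcal U\subseteq\mathcal A$ simply make explicit what the paper leaves implicit.
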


\begin{proof}
Expand the difference using the definition of $d_\Phi$:
\[
\mathbb E[d_\Phi(F,g)]-\mathbb E[d_\Phi(F,\bar f)]
=
-\Phi(g)+\Phi(\bar f)
-\big\langle \nabla\Phi(g),\,\mathbb E[F]-g\big\rangle_{\mathcal F}
+\big\langle \nabla\Phi(\bar f),\,\mathbb E[F]-\bar f\big\rangle_{\mathcal F}.
\]
Since $\mathbb E[F]=\bar f$, the last inner product vanishes and the remaining terms equal
$\Phi(\bar f)-\Phi(g)-\langle \nabla\Phi(g),\,\bar f-g\rangle_{\mathcal F}=d_\Phi(\bar f,g)$.
\end{proof}
With the bias-variance decomposition, showing the mean-as-minimiser property becomes direct.
\begin{theorem}
\label{thm:min-risk-mean}
Let $\mathcal{U}\subset \mathcal F$ be a nonempty open convex set and let $\Phi$ be a generator. 
Let $F$ be an $\mathcal F$-valued random element such that
$\mathbb P(F\in \mathcal{U})=1$, $\mathbb E\|F\|_{\mathcal F}<\infty$, $\mathbb E|\Phi(F)|<\infty$,
and assume that $\bar f=\mathbb E[F]\in \mathcal{U}$.
Then $\bar f$ is the unique minimiser of $g\mapsto \mathbb E[d_\Phi(F,g)]$ over $\mathcal{U}$, that is
\begin{equation}
\bar f = \argmin_{g\in \mathcal{U}}\ \mathbb E\bigl[d_\Phi(F,g)\bigr]. \label{eq:meanasmin}
\end{equation}
\end{theorem}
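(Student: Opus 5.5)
The plan is to read the claim off the bias-variance decomposition of Lemma~\ref{lem:bregman-decomp} and then use the identity of indiscernibles from Proposition~\ref{prop:basic_properties}. Fix any $g\in\mathcal U$. Since $\mathcal U$ is open and convex, I will take $\mathcal U\subseteq\mathcal A$, as is implicit whenever $d_\Phi(\cdot,g)$ is evaluated on $\mathcal U$. Then $\nabla\Phi(g)$ and $\nabla\Phi(\bar f)$ exist.

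First I check that $\mathbb E[d_\Phi(F,\bar f)]$ is finite. Expanding,
\[
d_\Phi(F,\bar f)=\Phi(F)-\Phi(\bar f)-\langle\nabla\Phi(\bar f),F-\bar f\rangle_{\mathcal F}.
\]
The first term is integrable because $\mathbb E|\Phi(F)|<\infty$. The second is a finite constant. The third is bounded in absolute value by $\|\nabla\Phi(\bar f)\|_{\mathcal F}(\|F\|_{\mathcal F}+\|\bar f\|_{\mathcal F})$, which is integrable since $\mathbb E\|F\|_{\mathcal F}<\infty$. The same argument with $\bar f$ replaced by an arbitrary $g\in\mathcal U$ shows that $\mathbb E[d_\Phi(F,g)]$ is finite for every $g\in\mathcal U$. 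So the finiteness hypothesis of Lemma~\ref{lem:bregman-decomp} holds automatically.

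With finiteness in hand, Lemma~\ref{lem:bregman-decomp} gives
\[
\mathbb E[d_\Phi(F,g)]=\mathbb E[d_\Phi(F,\bar f)]+d_\Phi(\bar f,g)
\]
for every $g\in\mathcal U$. The first term on the right does not depend on $g$. By Proposition~\ref{prop:basic_properties}(1), $d_\Phi(\bar f,g)\ge0$, with equality if and only if $g=\bar f$. Hence $\mathbb E[d_\Phi(F,g)]\ge\mathbb E[d_\Phi(F,\bar f)]$ for all $g\in\mathcal U$, and equality forces $g=\bar f$. This establishes both minimality and uniqueness.

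The main obstacle is the integrability bookkeeping. In particular, the lemma's proof moves the expectation inside the inner product, $\mathbb E\langle h,F\rangle_{\mathcal F}=\langle h,\mathbb E F\rangle_{\mathcal F}$. This step is justified because Bochner integrals commute with bounded linear functionals, using $\mathbb E\|F\|_{\mathcal F}<\infty$. A secondary point is that $\bar f\in\mathcal A$ is needed for $\nabla\Phi(\bar f)$ to exist; this is covered by the assumption $\bar f\in\mathcal U\subseteq\mathcal A$.
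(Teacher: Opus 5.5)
Your proposal is correct and follows the paper's proof: you invoke the bias-variance decomposition of Lemma~\ref{lem:bregman-decomp} and then the nonnegativity and identity of indiscernibles of $d_\Phi(\bar f,g)$. Your explicit check that $\mathbb E[d_\Phi(F,g)]$ is finite for every $g\in\mathcal U$ is a useful addition, because the paper's proof applies the lemma without verifying that hypothesis.
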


\begin{proof}
By Lemma~\ref{lem:bregman-decomp}, for all $g\in \mathcal{U}$,
\[
\mathbb E[d_\Phi(F,g)] = \mathbb E[d_\Phi(F,\bar f)] + d_\Phi(\bar f,g).
\]
The first term does not depend on $g$. The second term is nonnegative and equals $0$
if and only if $g=\bar f$. Therefore $\bar f$ minimises the risk, and the minimiser is unique.
\end{proof}
Combining the mean-as-minimiser property together with the connection between the dual divergence and the self-duality of the Hilbert space, we can also understand minimisers of expected FBDs in the first argument as \emph{quasi-arithmetic means}.  
This result also extends the finite vector Bregman divergence case. 
\begin{restatable}{theorem}{quasimean}
\label{thm:reverse-risk-qmean}
Let $\mathcal{U}\subset \mathcal F$ be a nonempty open convex set and let $\Phi$ be a Legendre-type generator. 
Let $F$ be an $\mathcal F$-valued random element such that $\mathbb P(F\in \mathcal{U})=1$, and assume that
$Z=\nabla\Phi(F)$ is Bochner integrable with Bochner mean $\bar z=\mathbb E[Z]\in V$.
Then the problem
$
\argmin_{g\in \mathcal{U}}\; \mathbb E\bigl[d_\Phi(g,F)\bigr]
$
admits a unique minimiser on $\mathcal{U}$, given by
\[
g^\star \;=\; \nabla\Phi^\ast(\bar z)\;=\;(\nabla\Phi)^{-1}\!\left(\mathbb E[\nabla\Phi(F)]\right) = \argmin_{g\in \mathcal{U}}\; \mathbb E\bigl[d_\Phi(g,F)\bigr].
\]
In particular, $g^\star$ is the quasi-arithmetic mean of $F$ associated with the transform $\nabla\Phi$.
\end{restatable}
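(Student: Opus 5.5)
The plan is to move the problem to the dual side using Theorem~\ref{thm:dual-bregman} and then apply the mean-as-minimiser result, Theorem~\ref{thm:min-risk-mean}, to the random element $Z=\nabla\Phi(F)$ with generator $\Phi^\ast$.

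First, I will use the Legendre-type assumption to make $\nabla\Phi$ a bijection from $\mathcal{A}=\mathrm{int}(\mathrm{dom}\,\Phi)$ onto $\mathcal{A}^\ast=\mathrm{int}(\mathrm{dom}\,\Phi^\ast)$, with inverse $\nabla\Phi^\ast$. This is the Hilbert-space analogue of Rockafellar's theorem, which should be available in \cite{bauschke2020}. I read the set $V$ in the statement as $V=\nabla\Phi(\mathcal{U})$, and I will assume, as the statement implicitly requires, that $V$ is convex and open.

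Second, for any $g\in\mathcal{U}$, Theorem~\ref{thm:dual-bregman} with $f=F$ gives, pointwise in the sample,
\[
d_\Phi(g,F)=d_{\Phi^\ast}\bigl(Z,\nabla\Phi(g)\bigr).
\]
Writing $w=\nabla\Phi(g)\in V$, the problem becomes minimising $w\mapsto\mathbb{E}[d_{\Phi^\ast}(Z,w)]$ over $w\in V$. The map $g\mapsto w$ is a bijection between $\mathcal{U}$ and $V$, so a unique minimiser $w^\star$ gives the unique minimiser $g^\star=\nabla\Phi^\ast(w^\star)$.

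Third, I apply Lemma~\ref{lem:bregman-decomp} and Theorem~\ref{thm:min-risk-mean} with $\Phi^\ast$ as the generator, $Z$ as the random element, and $V$ as the convex set. This gives
\[
\mathbb{E}\bigl[d_{\Phi^\ast}(Z,w)\bigr]=\mathbb{E}\bigl[d_{\Phi^\ast}(Z,\bar z)\bigr]+d_{\Phi^\ast}(\bar z,w),
\]
so $w^\star=\bar z$ is the unique minimiser, and therefore $g^\star=\nabla\Phi^\ast(\bar z)=(\nabla\Phi)^{-1}(\mathbb{E}[\nabla\Phi(F)])$. This is by definition the quasi-arithmetic mean of $F$ with respect to the transform $\nabla\Phi$.

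The main obstacle is verifying the hypotheses of Theorem~\ref{thm:min-risk-mean} for $\Phi^\ast$.
\begin{itemize}
\item $\Phi^\ast$ must itself be a generator: proper, lsc and convex follow from the Fenchel--Moreau proposition, while strict convexity and Fr\'echet differentiability on $\mathcal{A}^\ast$ come from the Legendre duality.
\item $\mathbb{E}\|Z\|<\infty$ is given by Bochner integrability.
\item $\mathbb{E}|\Phi^\ast(Z)|<\infty$ is not stated. I plan to avoid needing it by rereading the proof of Lemma~\ref{lem:bregman-decomp}. The difference $\mathbb{E}[d(Z,w)]-\mathbb{E}[d(Z,\bar z)]$ can be computed pointwise before taking expectations, because the $\Phi^\ast(Z)$ terms cancel inside the expectation. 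What remains is $\Phi^\ast(\bar z)-\Phi^\ast(w)-\langle\nabla\Phi^\ast(w),Z-w\rangle+\langle\nabla\Phi^\ast(\bar z),Z-\bar z\rangle$, which is integrable.
\end{itemize}
Consequently the decomposition holds whenever $\mathbb{E}[d_\Phi(g,F)]$ is finite, and it holds in the extended sense, with both sides $+\infty$, otherwise. Minimality and uniqueness then follow from nonnegativity and the identity of indiscernibles for $d_{\Phi^\ast}$ (Proposition~\ref{prop:basic_properties}).
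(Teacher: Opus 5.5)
Your route is the paper's: rewrite $d_\Phi(g,F)=d_{\Phi^\ast}(Z,\nabla\Phi(g))$ using Theorem~\ref{thm:dual-bregman}, reparametrise through the bijection $g\mapsto\nabla\Phi(g)$ from $\mathcal U$ onto $V$, and run the mean-as-minimiser argument for $\Phi^\ast$ and $Z$. The paper simply asserts that $\bar z$ is the unique dual minimiser. In doing so it tacitly applies Theorem~\ref{thm:min-risk-mean} without that theorem's hypothesis $\mathbb{E}|\Phi^\ast(Z)|<\infty$, which is absent from the statement. You were right to flag this.

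The gap is in how you dispose of it. Your pointwise cancellation is correct and gives $\mathbb{E}[d_{\Phi^\ast}(Z,w)]=\mathbb{E}[d_{\Phi^\ast}(Z,\bar z)]+d_{\Phi^\ast}(\bar z,w)$ in $[0,\infty]$. But when $\mathbb{E}[d_{\Phi^\ast}(Z,\bar z)]=+\infty$, this says the risk is $+\infty$ at \emph{every} $w\in V$. Then every $g\in\mathcal U$ attains the infimum, and your final claim of uniqueness is false.

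Nor does the cancellation buy any generality. Since $\Phi^\ast(Z)=d_{\Phi^\ast}(Z,\bar z)+\Phi^\ast(\bar z)+\langle\nabla\Phi^\ast(\bar z),Z-\bar z\rangle_{\mathcal F}$ and the last two terms are integrable, finiteness of the risk at any single point is equivalent to $\mathbb{E}|\Phi^\ast(Z)|<\infty$.

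The bad case genuinely occurs under the stated hypotheses, so the statement as written cannot be proved. Take $\mathcal F=\mathbb R$ and $\Phi(x)=x\log x-x$, which is of Legendre type with $\mathcal U=(0,\infty)$, $V=\mathbb R$ and $\Phi^\ast(y)=e^y$. Let $F$ have density $x^{-2}$ on $[1,\infty)$. Then $Z=\log F$ is integrable with $\bar z=1$. But $d_\Phi(g,F)=g\log(g/F)-g+F$ and $\mathbb{E}[F]=\infty$, so the risk is identically $+\infty$ instead of being uniquely minimised at $g^\star=e$.

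The fix is to add $\mathbb{E}|\Phi^\ast(Z)|<\infty$ to the hypotheses; equivalently, $\mathbb{E}[d_\Phi(g_0,F)]<\infty$ for some $g_0\in\mathcal U$. With this, all terms in your decomposition are finite and your argument is complete.

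Three minor points:
\begin{itemize}
\item You need not assume $V$ open or convex. Only $\bar z\in V$ is used, and that is given; moreover $\nabla\Phi(\mathcal U)$ need not be convex.
\item The standard Legendre duality results guarantee G\^ateaux differentiability of $\Phi^\ast$. That is all you use, so you do not need to claim Fr\'echet differentiability.
\item Uniqueness can be read off from $d_{\Phi^\ast}(\bar z,\nabla\Phi(g))=d_\Phi(g,g^\star)$, via Theorem~\ref{thm:dual-bregman}, using strict convexity of $\Phi$ itself.
\end{itemize}
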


\subsection{Characterisation of symmetric FBDs}
It is well-known that in the finite-vector Bregman divergence setting, the only Bregman divergence which is symmetric is the squared Mahalanobis distance. 
This result does not hold for FBDs in $L^r(\nu)$ space, as shown by the squared-bias example of~\cite{J-FunctionalBD-2008} (example 2.1.2). 
However, for Hilbert spaces $\mathcal{H}$, such a characterisation does indeed hold, for twice differentiable generators. 
\begin{restatable}{theorem}{symmetric}
\label{thm:symmetric}
Let $\Phi$ be a twice continuously Fréchet differentiable Legendre-type generator on $\mathcal{A}$. 
Let $d_\Phi$ be the functional Bregman divergence as defined in Definition~\ref{def:fbd}. 
Then the following are equivalent:
\begin{enumerate}
\item The FBD is symmetric, that is, $d_\Phi(f,g)=d_\Phi(g,f)$ for all $f,g \in \mathcal{A}$.
\item There exist a bounded self-adjoint strictly positive operator $T : \mathcal{F} \to \mathcal{F}$, $b \in \mathcal{F}$, and $c \in \mathbb{R}$ such that
\[
\Phi(f)=\frac12\langle f,Tf\rangle_{\mathcal{F}}+\langle b,f\rangle_{\mathcal{F}}+c, \qquad f\in \mathcal{A}.
\]
\item There exists a bounded self-adjoint strictly positive operator $T : \mathcal{F} \to \mathcal{F}$ such that
\[
d_\Phi(f,g)=\frac12\langle f-g,T(f-g)\rangle_{\mathcal{F}}, \qquad f,g\in \mathcal{A}.
\]
\end{enumerate}
\end{restatable}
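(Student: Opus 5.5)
I will prove the cycle $(2)\Rightarrow(3)\Rightarrow(1)\Rightarrow(2)$. The first two implications are direct computations; the real work is $(1)\Rightarrow(2)$.

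\emph{Step $(2)\Rightarrow(3)$.} Assume $\Phi(f)=\frac12\langle f,Tf\rangle_{\mathcal F}+\langle b,f\rangle_{\mathcal F}+c$ with $T$ bounded and self-adjoint. Then $\nabla\Phi(g)=Tg+b$. Substituting into Definition~\ref{def:fbd} and expanding, the affine terms cancel and we are left with
\[
\tfrac12\langle f,Tf\rangle-\tfrac12\langle g,Tg\rangle-\langle Tg,f-g\rangle=\tfrac12\langle f-g,T(f-g)\rangle .
\]
Self-adjointness is used to write $\langle g,Tf\rangle=\langle Tg,f\rangle$.

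\emph{Step $(3)\Rightarrow(1)$.} The expression $\frac12\langle f-g,T(f-g)\rangle$ is invariant under swapping $f$ and $g$, so $d_\Phi$ is symmetric.

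\emph{Step $(1)\Rightarrow(2)$.} Adding $d_\Phi(f,g)$ and $d_\Phi(g,f)$ gives
\[
d_\Phi(f,g)+d_\Phi(g,f)=\langle\nabla\Phi(f)-\nabla\Phi(g),f-g\rangle .
\]
Symmetry therefore says $d_\Phi(f,g)=\frac12\langle\nabla\Phi(f)-\nabla\Phi(g),f-g\rangle$ for all $f,g\in\mathcal A$. Fix $g$ and write $f=g+th$ with $h\in\mathcal F$ and $t$ small; this is allowed because $\mathcal A$ is open. Define
\[
\psi(t)=\Phi(g+th)-\Phi(g)-t\langle\nabla\Phi(g),h\rangle-\tfrac t2\langle\nabla\Phi(g+th)-\nabla\Phi(g),h\rangle \equiv 0 .
\]
Since $\Phi\in C^2$, I differentiate in $t$ up to third order where possible. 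It is cleaner to differentiate in the first argument instead. The identity $\Phi(f)-\Phi(g)-\langle\nabla\Phi(g),f-g\rangle=\frac12\langle\nabla\Phi(f)-\nabla\Phi(g),f-g\rangle$ holds for all $f$ near $g$. Taking the Fréchet derivative in $f$ in direction $h$ gives
\[
\langle\nabla\Phi(f)-\nabla\Phi(g),h\rangle=\tfrac12\langle\nabla^2\Phi(f)h,f-g\rangle+\tfrac12\langle\nabla\Phi(f)-\nabla\Phi(g),h\rangle ,
\]
hence $\nabla\Phi(f)-\nabla\Phi(g)=\nabla^2\Phi(f)(f-g)$, using self-adjointness of the Hessian (Schwarz's theorem for $C^2$ maps). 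Swapping roles, which is legitimate since $g$ is arbitrary, also gives $\nabla\Phi(f)-\nabla\Phi(g)=\nabla^2\Phi(g)(f-g)$. Thus $(\nabla^2\Phi(f)-\nabla^2\Phi(g))(f-g)=0$ for all $f,g\in\mathcal A$.

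\emph{Main obstacle: constancy of the Hessian.} The step I expect to be hardest is upgrading this to constancy of $\nabla^2\Phi$ on $\mathcal A$, since the identity only controls the Hessian difference in the direction $f-g$. My plan is as follows. Fix $g$ and choose $f=g+t(h+k)$, $f'=g+th$, $f''=g+tk$. From $\nabla\Phi(f)-\nabla\Phi(g)=\nabla^2\Phi(g)(f-g)$ we get that $\nabla\Phi$ is affine along every ray from $g$, with the same linear part $T_g=\nabla^2\Phi(g)$. Hence $\nabla\Phi(f)=\nabla\Phi(g)+T_g(f-g)$ for every $f$ in a ball around $g$, so $\nabla\Phi$ is affine on that ball. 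Then $\nabla^2\Phi\equiv T_g$ there. Local constancy on the open set, combined with convexity (hence connectedness) of $\mathcal A$, gives a global $T$.

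\emph{Conclusion.} Integrating $\nabla\Phi(f)=Tf+b$ along segments yields the form in (2). Boundedness of $T$ comes from $\Phi$ being $C^2$. Self-adjointness follows from symmetry of the second derivative. Strict positivity follows because $\langle h,Th\rangle=2d_\Phi(g+h,g)>0$ for $h\neq0$, by Proposition~\ref{prop:basic_properties}.
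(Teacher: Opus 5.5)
Your proposal is correct and follows essentially the paper's route. The paper derives $\nabla\Phi(f)-\nabla\Phi(g)=D^2\Phi(f)(f-g)$ exactly as you do, then differentiates in $g$ to get $D^2\Phi(f)=D^2\Phi(g)$; this is the same computation as your observation that the swapped identity makes $\nabla\Phi$ affine in $f$ with linear part $\nabla^2\Phi(g)$. Note that the \emph{main obstacle} is not really one: for fixed $g$ that identity already holds for all $f\in\mathcal A$, so no ball or connectedness step is needed, and the abandoned $\psi(t)$ and the unused $f',f''$ can be dropped. For strict positivity on all of $\mathcal F$, rescale $h$ so that $g+th\in\mathcal A$.
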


\subsection{Metricisation of FBD}
One approach to symmetrising and metricising a Bregman divergence in finite dimensions is given by~\cite{acharyya2013bregman}, whose basic construct we generalise for FBD. 
In order to state and prove our result, we require a notion of a positive semidefinite operator on a Hilbert space. 
Construction of positive definite and positive semidefinite adjoint operators (via L\"owner ordering) on $\mathcal{F}^2$ are discussed in Example~13.18 of~\cite{bauschke2020}. 
\begin{theorem}
\label{thm:metricisation}
    Let $A$ and $B$ be bounded self-adjoint positive definite operators on $\mathcal{A}$. 
    Define the symmetrised square-root divergence 
    \begin{equation}
    \label{eq:gsb}
    d_\Phi^{gsb}(f, g) = d_\Phi(f,g) + d_\Phi(g,f) + \frac{1}{2}\Vert f - g \Vert_A^2 + \frac{1}{2} \Vert \nabla \Phi(f) - \nabla \Phi(g) \Vert_B^2,
    \end{equation}
    on the non-empty interior domain of $\Phi$, 
    where $\Vert \cdot \Vert_B^2 = \langle \cdot, B \cdot \rangle_{\mathcal{F}} $ and $\Vert \cdot \Vert_A^2 = \langle \cdot, A \cdot \rangle_{\mathcal{F}}$.
    Define the product Hilbert space $\mathcal{F}^2$ with inner product $\langle (f_1, f_2), (g_1, g_2) \rangle_{\mathcal{F}^2} = \langle f_1, g_1 \rangle_{\mathcal{F}} + \langle f_2, g_2 \rangle_{\mathcal{F}}$.
    Define a self-adjoint operator $M:\mathcal{F}^2 \to \mathcal{F}^2$ by 
    \begin{align*}
        M\big((f,g)\big) = (Af + g, f + Bg)
    \end{align*}
    Then $\sqrt{d_\Phi^{gsb}(f, g)}$ is a metric on $\mathcal{A}^2$ if $M$
    is PSD on $\mathcal{A}^2$. 
    A sufficient condition on $A$ and $B$ can be derived from the Schur complement of $M$ ($B\succeq A^{-1}$ and $A$ is strongly positive, that is, $\langle f, Af \rangle_{\mathcal{F}} \geq \alpha \Vert f \Vert_{\mathcal{F}}$ for some $\alpha > 0$). 
\end{theorem}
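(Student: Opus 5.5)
The plan is to follow the finite-dimensional argument of~\cite{acharyya2013bregman} and exhibit $d_\Phi^{gsb}$ as a squared Hilbert seminorm of a single vector in $\mathcal{F}^2$. First I would simplify the symmetrised divergence. Adding $d_\Phi(f,g)$ and $d_\Phi(g,f)$ from Definition~\ref{def:fbd}, the generator values cancel:
\[
d_\Phi(f,g)+d_\Phi(g,f)=\langle \nabla\Phi(f)-\nabla\Phi(g),\, f-g\rangle_{\mathcal F}.
\]
Set $u=f-g$ and $v=\nabla\Phi(f)-\nabla\Phi(g)$. Then
\[
d_\Phi^{gsb}(f,g)=\tfrac12\langle u,Au\rangle_{\mathcal F}+\langle u,v\rangle_{\mathcal F}+\tfrac12\langle v,Bv\rangle_{\mathcal F}
=\tfrac12\big\langle (u,v),\,M(u,v)\big\rangle_{\mathcal F^2},
\]
since $\langle (u,v),(Au+v,u+Bv)\rangle_{\mathcal F^2}=\langle u,Au\rangle+2\langle u,v\rangle+\langle v,Bv\rangle$. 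Self-adjointness of $M$ follows directly from that of $A$ and $B$.

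Next, introduce the map $\psi:\mathcal A\to\mathcal F^2$, $\psi(f)=(f,\nabla\Phi(f))$. Then $(u,v)=\psi(f)-\psi(g)$, so $d_\Phi^{gsb}(f,g)=\tfrac12\Vert\psi(f)-\psi(g)\Vert_M^2$, where $\Vert z\Vert_M^2=\langle z,Mz\rangle_{\mathcal F^2}$. When $M$ is PSD, $\langle z,Mw\rangle$ is a symmetric positive semidefinite bilinear form. It therefore satisfies Cauchy--Schwarz, and $\Vert\cdot\Vert_M$ is a seminorm.

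The metric axioms then follow in order:
\begin{itemize}
\item \emph{Symmetry} is immediate from the formula.
\item \emph{Nonnegativity} holds because $M$ is PSD.
\item \emph{Triangle inequality:} $\sqrt{d^{gsb}_\Phi}$ is the pullback of the seminorm distance under $\psi$, so the triangle inequality for $\Vert\cdot\Vert_M$ gives it.
\item \emph{Identity of indiscernibles:} if $\sqrt{d^{gsb}_\Phi(f,g)}=0$, then each of the nonnegative summands vanishes. By Proposition~\ref{prop:basic_properties} the $d_\Phi$ terms are each nonnegative, and so is $\tfrac12\Vert u\Vert_A^2$. Hence $d_\Phi(f,g)=0$, which gives $f=g$ by identity of indiscernibles. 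Alternatively, positive definiteness of $A$ gives $u=0$ directly.
\end{itemize}
This step also shows that positive semidefiniteness of $M$ alone suffices for injectivity, because of the separate Bregman terms. The phrase ``metric on $\mathcal A^2$'' I would read as a metric on $\mathcal A$, i.e.\ a function on $\mathcal A\times\mathcal A$.

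Finally, for the sufficient condition I would complete the square. Assume $A$ is strongly positive, so $A$ is invertible with bounded, self-adjoint, positive $A^{-1}$. Then
\[
\langle (u,v),M(u,v)\rangle=\langle u+A^{-1}v,\,A(u+A^{-1}v)\rangle+\langle v,(B-A^{-1})v\rangle .
\]
Expanding the first term gives $\langle u,Au\rangle+2\langle u,v\rangle+\langle v,A^{-1}v\rangle$, which checks the identity. Both terms are nonnegative when $B\succeq A^{-1}$.

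The main obstacle I expect is this Schur-complement step in infinite dimensions. Invertibility of $A$ requires strong positivity (coercivity $\langle f,Af\rangle\ge\alpha\Vert f\Vert^2$) via Lax--Milgram, rather than mere positive definiteness. One also needs the Bregman identity-of-indiscernibles argument to work when $M$ is only semidefinite; everything else is algebra.
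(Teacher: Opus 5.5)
Your proposal is correct and follows the paper's proof. As there, you write $d_\Phi^{gsb}(f,g)=\tfrac12\langle b(f)-b(g),\,M(b(f)-b(g))\rangle_{\mathcal F^2}$ with the embedding $f\mapsto(f,\nabla\Phi(f))$, take the metric axioms from the induced seminorm, and get identity of indiscernibles from the nonnegative Bregman terms; in addition, your completing-the-square check of the Schur-complement condition, with coercivity correctly stated as $\langle f,Af\rangle_{\mathcal F}\ge\alpha\Vert f\Vert_{\mathcal F}^2$, fills in a step the paper only asserts.
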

\begin{proof}
    Define a block operator $M$ as above. 
    Note that $M$ is bounded: $\Vert M((f,g)) \Vert_{\mathcal{F}^2} \leq \Vert Af \Vert_{\mathcal{F}} + \Vert g \Vert_{\mathcal{F}} + \Vert f \Vert_{\mathcal{F}} + \Vert Bg \Vert_{\mathcal{F}} \leq (\Vert A \Vert + 1) \Vert f \Vert_{\mathcal{F}} + (\Vert B \Vert + 1) \Vert g \Vert_{\mathcal{F}}$ and self-adjoint. 
    Define an embedding $b(f)$ by 
    \begin{align*} 
        b(f) = \begin{pmatrix}
            f \\ \nabla \Phi(f) 
        \end{pmatrix}  \in \mathcal{F}^2.
    \end{align*}
    Then the generalised symmetric FBD can be represented as a Mahalanobis squared distance on the embeddings,
    \begin{align*}
        d_\Phi^{gsb}(f, g) &= \frac{1}{2}\big\langle b(f) - b(g), \, M  \big(b(f) - b(g)\big) \big\rangle_{\mathcal{F}^2}.
    \end{align*}
    Hence if $M$ is positive semidefinite, $\sqrt{d_\Phi^{gsb}}$ satisfies nonnegativity, symmetry and triangle inequality. 
    For the identity of indiscernibles, note that all three terms in~\eqref{eq:gsb} are nonnegative, and the first is strictly positive (since $\Phi$ is strictly convex) unless $f = g$. 
\end{proof}
We remark that our result applies to the finite dimensional case without the need for the use of tools from conditionally positive semidefinite functions~\citep{schoenberg1938metric}, and in particular covers the matrix extension~\citep[Theorem 4]{acharyya2013bregman}. 
Metricising the FBD allows for a host of opportunities to build non-Euclidean metric algorithms and applications. 
For example, one can use the triangle inequality to accelerate the k-means algorithm~\citep{elkan2003using}, by noting that in order to assign points to cluster centres, it often suffices to lower or upper bound distances to cluster centres.

\subsection{Kernelised functional Bregman divergences}
\label{sec:k-FBD}
Let $k:X \times X \to\mathbb R$ be a positive semidefinite kernel with associated RKHS $\mathcal H$.
Assume $k$ is uniformly bounded, so that the kernel mean embedding $\mu:\mathcal{F}\to\mathcal{H}$ defined by 
$
\mu(f)=\int_\Omega k(x,\cdot)\,f(x)\,d\nu(x)\in\mathcal H
$
is well-defined for all integrable $f\in \mathcal{F}$.
Then $\mu$ is bounded linear, and its adjoint
$
\mu^*:\mathcal H\to \mathcal{F}
$
is well-defined. 
The following observation allows us to construct kernelised FBDs, which we will find easy to estimate. 
\begin{restatable}{proposition}{kFBD}
\label{prop:kfbd}
    Consider a Bregman functional $\Phi:\mathcal{F}\to\mathbb{R}$ of the form $\Phi(f) = F(\mu(f))$, where $\mu(f)$ is the kernel mean embedding of $f$, and $F:\mathcal{H}\to\mathcal{R}$ is a generator.
    Then for all $f, g \in \mathcal{A}$,
    $$ d_{\Phi}(f,g) = d_F\big( \mu(f), \mu(g) \big).$$
    Moreover, suppose $\mu$ is injective on $\mathcal{A}$. 
    Then $\Phi = F \circ \mu$ is a generator (as in Definition~\ref{def:generator}).
\end{restatable}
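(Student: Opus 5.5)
The first claim follows from the chain rule for Fréchet derivatives together with the adjoint of $\mu$. Because $\mu:\mathcal{F}\to\mathcal{H}$ is bounded linear, it is its own Fréchet derivative. Hence if $F$ is Fréchet differentiable at $\mu(g)$, then $\Phi=F\circ\mu$ is Fréchet differentiable at $g$, with
\[
D\Phi[g;h] = DF[\mu(g);\mu(h)] = \langle \nabla F(\mu(g)), \mu(h)\rangle_{\mathcal{H}} = \langle \mu^\ast\nabla F(\mu(g)), h\rangle_{\mathcal{F}}.
\]
By the Riesz identification, $\nabla\Phi(g)=\mu^\ast\nabla F(\mu(g))$. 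Substituting this into Definition~\ref{def:fbd} and moving $\mu^\ast$ back across the inner product, linearity of $\mu$ gives $\mu(f-g)=\mu(f)-\mu(g)$, and so
\[
d_\Phi(f,g)=F(\mu(f))-F(\mu(g))-\langle \nabla F(\mu(g)),\mu(f)-\mu(g)\rangle_{\mathcal{H}} = d_F(\mu(f),\mu(g)).
\]
One point must be checked first: $\mu(g)$ has to lie in the region where $F$ is differentiable, namely $\operatorname{int}\operatorname{dom}F$. I would handle this by interpreting $\mathcal{A}$ as $\operatorname{int}\operatorname{dom}\Phi$ and noting that $\operatorname{dom}\Phi=\mu^{-1}(\operatorname{dom}F)$. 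I would also use that $\mu^{-1}(\operatorname{int}\operatorname{dom}F)$ is open by continuity of $\mu$ and is contained in $\operatorname{dom}\Phi$, so it lies in $\mathcal{A}$. Since this containment runs only one way, the cleanest course is to state the identity for $f,g$ with $\mu(f),\mu(g)\in\operatorname{int}\operatorname{dom}F$. This is automatic when $F$ is finite everywhere, since the statement writes $F:\mathcal{H}\to\mathbb{R}$, and I would adopt that reading.

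For the second claim I would verify the conditions of Definition~\ref{def:generator} one at a time. \emph{Proper}: $\Phi>-\infty$ everywhere because $F$ is, and the domain is nonempty because $\mathcal{A}\ne\emptyset$ (or because $F$ is real valued). \emph{Lower semicontinuity}: $\Phi$ is the composition of the lsc function $F$ with the continuous map $\mu$, so each sublevel set $\{\Phi\le t\}=\mu^{-1}(\{F\le t\})$ is closed. \emph{Convexity}: $\Phi(\lambda f+(1-\lambda)g)=F(\lambda\mu(f)+(1-\lambda)\mu(g))\le\lambda\Phi(f)+(1-\lambda)\Phi(g)$ by linearity of $\mu$. \emph{Fréchet differentiability on the interior}: this is the chain-rule computation above.

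The main obstacle is strict convexity, and this is exactly where injectivity of $\mu$ enters. Take $f\neq g$ in $\mathcal{A}$. Injectivity gives $\mu(f)\neq\mu(g)$, and strict convexity of $F$ then gives strict inequality in the convexity display for every $\lambda\in(0,1)$. Without injectivity the argument fails: if $\mu(f)=\mu(g)$ with $f\ne g$, then $\Phi$ is constant along the segment from $f$ to $g$.

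One subtlety remains. Injectivity is assumed only on $\mathcal{A}$, while strict convexity is formally a property on $\operatorname{dom}\Phi$. I would address it in either of two ways. The first is to remark that strict convexity is only used on $\mathcal{A}$, which is where the divergence is defined. The second is to note that $\mathcal{A}$ is convex and that only pairs in $\mathcal{A}$ are needed for the nonnegativity and identity-of-indiscernibles properties in Proposition~\ref{prop:basic_properties}. Either way I would state the conclusion as strict convexity on $\mathcal{A}$.
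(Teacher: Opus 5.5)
Your argument for the identity is the same as the paper's. You apply the chain rule through the bounded linear map $\mu$, use the Riesz identification $\nabla\Phi(g)=\mu^\ast\nabla F(\mu(g))$, and move $\mu^\ast$ back across the inner product. Your domain caveat, reading $F$ as real-valued so that $\mu(g)\in\operatorname{int}\operatorname{dom}F$ automatically, is the reading the paper implicitly uses.

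The paper's proof stops after the identity and never verifies the ``moreover'' claim that $F\circ\mu$ is a generator. Your item-by-item check fills that omission correctly:
\begin{itemize}
\item lower semicontinuity, because the sublevel sets $\mu^{-1}(\{F\le t\})$ are closed;
\item convexity, from linearity of $\mu$;
\item strict convexity, from injectivity of $\mu$ combined with strict convexity of $F$;
\item Fr\'echet differentiability, from the chain rule.
\end{itemize}

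You can drop the final subtlety about injectivity holding only on $\mathcal{A}$. The set $\mathcal{A}=\operatorname{int}\operatorname{dom}\Phi$ is open and nonempty (it is all of $\mathcal{F}$ when $F$ is real-valued), and $\mu$ is linear. Suppose $\mu(h)=0$ with $h\neq 0$. Then for any $f\in\mathcal{A}$ and small $\varepsilon>0$, both $f$ and $f+\varepsilon h$ lie in $\mathcal{A}$ and have the same embedding, contradicting injectivity on $\mathcal{A}$. Hence injectivity on $\mathcal{A}$ already forces $\ker\mu=\{0\}$, and $\Phi$ is strictly convex on all of $\operatorname{dom}\Phi$, not merely on $\mathcal{A}$.
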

We similarly inherit relevant results regarding duality when $F$ is additionally of Legendre type (Definition~\ref{def:legendre}). 

\paragraph{Noising interpretation.} If the kernel $k$ is stationary, that is, there exists some $\kappa$ such that $k(x,y) = \kappa(x-y)$, and moreover if $\kappa$ is itself a probability density function with respect to some measure $\nu$, then the kernel mean embedding
\begin{equation}\mu(f)(x) = \int \kappa(x-y) f(y) \nu(dy) \label{eq:noising}
\end{equation}
may be interpreted as the density obtained by adding independent noise following $\kappa$ to a random variable following $f$. 
Hence this k-FBD is a divergence generated by $F$ on the noised densities. 

\subsection{Deformed squared MMD}
\label{sec:deformed}
We detail several special cases of generator $F$ in Appendix~\ref{app:special_cases}. 
In the main body, we specialise on a particular k-FBD, generalising the squared MMD.
\begin{definition}
\label{def:deformed}
Let $\phi:[0,\infty)\to \mathbb R$ be twice continuously Fr\'echet differentiable on $(0,\infty)$, such that $\phi'(r) >0$ for $r >0$ and $\phi''(r) > 0$ for $r > 0$. A functional $\Phi:\mathcal{F} \to \mathbb R$ of the form $\Phi(f)=\phi(\|\mu(f)\|_{\mathcal H})$ is called a deformed generator.  
\end{definition}
Since $\mu$ is linear and $u\mapsto \phi(\|u\|_{\mathcal H})$ is convex on $\mathcal H$, the deformed generator $\Phi$ is convex. 
For $\|\mu(f)\|_{\mathcal H}>0$, the Fr\'echet derivative at $f$ acting on a function $a \in \mathcal{F}$ satisfies (see Examples~\ref{example:frechet_radial} and~\ref{example:frechet_radial_composed})
\[
D\Phi[f;a]
=
\Bigg\langle
\frac{\phi'(\|\mu(f)\|_{\mathcal H})}{\|\mu(f)\|_{\mathcal H}}\,
\mu(f), \mu(a)
\Bigg\rangle_{\mathcal{H}}.
\]
For $\|\mu(f)\|_{\mathcal H}>0$, the Hessian operator
$D^2\Phi[f;\cdot,\cdot]: \mathcal{F} \times \mathcal{F} \to \mathbb{R}$
satisfies (see Examples~\ref{example:frechet_hessian})
\[
D^2\Phi[f;a,b]
=
\Bigg\langle \Bigg(
\frac{\phi'(\|\mu(f)\|_{\mathcal H})}{\|\mu(f)\|_{\mathcal H}}\,I_{\mathcal H}
+
\left(
\phi''(\|\mu(f)\|_{\mathcal H})
-
\frac{\phi'(\|\mu(f)\|_{\mathcal H})}{\|\mu(f)\|_{\mathcal H}}
\right)
\frac{\mu(f)\otimes \mu(f)}{\|\mu(f)\|_{\mathcal H}^{2}} \Bigg)
\mu(a), \, \mu(b) \Bigg\rangle_{\mathcal{H}}.
\]
We may write a spectral decomposition of the Hessian operator by identifying orthogonal projections and associated eigenvalues,
\[
P^{\parallel}_{\mu(f)}
=
\frac{\mu(f)\otimes \mu(f)}{\|\mu(f)\|_{\mathcal H}^{2}},
\qquad
P^{\perp}_{\mu(f)}
=
I_{\mathcal H}-P^{\parallel}_{\mu(f)},  \qquad\lambda^\parallel(r)=\phi''(r), \qquad \lambda^\perp(r)=\phi'(r)/r.
\]
We verify that the Hessian is indeed positive definite, under the assumption that $\mu$ is injective. 
Finally, the k-FBD itself is given by
\begin{align*}
    d_\Phi(f,g)
&=
\phi\!\left(\|\mu(f)\|_{\mathcal H}\right)
-
\phi\!\left(\|\mu(g)\|_{\mathcal H}\right)
-
\frac{\phi'\!\left(\|\mu(g)\|_{\mathcal H}\right)}{\|\mu(g)\|_{\mathcal H}}
\,
\big\langle
\mu(g),
\mu(f)-\mu(g)
\big\rangle_{\mathcal H}
\end{align*}
The special case of $\phi(r)=r^2$ recovers the squared maximum mean discrepancy on probability distributions,
\begin{align*}
    \Phi(p) = \Vert \mu(p) \Vert_\mathcal{H}^2 \implies d_{\Phi}(p,q) = \mathrm{MMD}_k^2(p,q). \numberthis \label{eq:mmd_special_case} 
\end{align*}
We note also that the squared MMD case corresponds with a self-dual generator $F$, and belongs to the class of symmetric divergences characterised by Theorem~\ref{thm:symmetric}, with $T = 2\mu^\ast \mu$, where $\mu^\ast$ is the adjoint of $\mu$.

\paragraph{Comparison with squared MMD on a KME ball.}
Fix $R>0$, and define the KME ball
\[
\mathcal P_R
\;=\;
\{\,p\in \mathcal{A} \;:\;\|\mu(p)\|_{\mathcal H}\le R\,\}.
\]
For $r>0$, the two eigenvalues of the Hessian of the map $u\mapsto \phi(\|u\|_{\mathcal H})$ are
$\lambda^\perp(r)=\phi'(r)/r$ and $\lambda^\parallel(r)=\phi''(r)$. Since $\phi\in C^2([0,\infty))$ and $\phi'(0)=0$, the quotient $\phi'(r)/r$ extends continuously to $r=0$ with value $\phi''(0)$. We therefore define
\[
\lambda^\perp(0)=
\lambda^\parallel(0)=\phi''(0), \quad
L_\phi(R)
\;=\;
\sup_{0\le r\le R}\max\{\lambda^\parallel(r),\lambda^\perp(r)\},
\quad
m_\phi(R)
\;=\;
\inf_{0\le r\le R}\min\{\lambda^\parallel(r),\lambda^\perp(r)\}.
\]
for $R > 0$.
These quantities control the comparison of the FBD generated by $\Phi$ with squared MMD, as the result below shows.
\begin{table}[t]
\centering
\begin{tabular}{c||c|c||c|c}
$\phi(r)$
& $\lambda_r^\perp$
& $\lambda_r^\parallel$
& $L_\phi(R)$
& $m_\phi(R)$
\\ \hline\hline

$r^2$
& $2$
& $2$
& $2$
& $2$
\\[0.3em]

$e^r-1-r$
& $\dfrac{e^r-1}{r}$ for $r>0$, with value $1$ at $r=0$
& $e^r$
& $e^R$
& $1$
\\[0.8em]

$2\log\cosh r$
& $2\dfrac{\tanh r}{r}$ for $r>0$, with value $2$ at $r=0$
& $2\operatorname{sech}^2 r$
& $2$
& $2\operatorname{sech}^2(R)$
\\[0.8em]

$\sqrt{1+r^2}-1$
& $(1+r^2)^{-1/2}$
& $(1+r^2)^{-3/2}$
& $1$
& $(1+R^2)^{-3/2}$
\\[0.8em]

$r^2+\lambda r^4 \;(\lambda\ge 0)$
& $2+4\lambda r^2$
& $2+12\lambda r^2$
& $2+12\lambda R^2$
& $2$
\\[0.3em]

$r^p \;(p>2)$
& $p\,r^{p-2}$ for $r>0$, with value $0$ at $r=0$
& $p(p-1)\,r^{p-2}$
& $p(p-1)R^{p-2}$
& $0$
\\ \hline\hline

$r^p \;(1<p<2)$
& \multicolumn{4}{c}{not covered here, since $\phi\notin C^2([0,\infty))$}
\\[0.3em]

$e^r$
& \multicolumn{4}{c}{not covered here, since $\phi'(0)\neq 0$}
\\[0.3em]

$r$
& \multicolumn{4}{c}{not covered here, since $\phi'(0)\neq 0$}
\end{tabular}
\caption{\label{tab:examples}
Examples of radial generators $\phi$ and the corresponding Hessian eigenvalues on $\mathcal H$, together with the resulting comparison constants on the KME ball $\mathcal P_R$.
}
\end{table}

\begin{restatable}{theorem}{sandwich}
\label{thm:radial-bounds}
Consider $\phi$ and $\Phi$ as in Definition~\ref{def:deformed} and satisfy $\phi'(0)=0$, and assume that $\phi$ nondecreasing on $[0,\infty)$. 
Then, for every $R>0$ and every $p,q\in \mathcal P_R$,
\[
\frac{m_\phi(R)}{2}\,\mathrm{MMD}_k^2(p,q)
\;\le\;
d_{\Phi}(p,q)
\;\le\;
\frac{L_\phi(R)}{2}\,\mathrm{MMD}_k^2(p,q),
\]
where $\mathrm{MMD}^2_k$ is as in~\eqref{eq:mmd}. 
In particular, if $m_\phi(R)>0$ and $L_\phi(R)<\infty$, then $d_{\Phi}$ is uniformly comparable to squared MMD on $\mathcal P_R$.
\end{restatable}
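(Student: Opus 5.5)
By Proposition~\ref{prop:kfbd} (the first claim needs no injectivity), we have $d_\Phi(p,q)=d_G(\mu(p),\mu(q))$, where $G(u)=\phi(\|u\|_{\mathcal H})$ on $\mathcal H$. By \eqref{eq:mmd}, $\mathrm{MMD}_k^2(p,q)=\|\mu(p)-\mu(q)\|_{\mathcal H}^2$. So it suffices to prove the following purely Hilbert-space statement. For all $u,v\in\mathcal H$ with $\|u\|_{\mathcal H},\|v\|_{\mathcal H}\le R$,
\[
\tfrac{m_\phi(R)}{2}\|u-v\|_{\mathcal H}^2\le d_G(u,v)\le \tfrac{L_\phi(R)}{2}\|u-v\|_{\mathcal H}^2 .
\]

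The main tool is the integral form of Taylor's theorem along the segment $w_t=v+t(u-v)$, $t\in[0,1]$:
\[
d_G(u,v)=\int_0^1 (1-t)\,D^2G[w_t;u-v,u-v]\,dt .
\]
Since the ball of radius $R$ in $\mathcal H$ is convex, $\|w_t\|_{\mathcal H}\le R$ for all $t$. At any $w\neq 0$, the Hessian formula of \S\ref{sec:deformed} gives the spectral decomposition $D^2G[w]=\lambda^\perp(\|w\|)P^\perp_w+\lambda^\parallel(\|w\|)P^\parallel_w$. The two projections are orthogonal and sum to the identity, so for any $h$,
\[
\min\{\lambda^\parallel,\lambda^\perp\}\|h\|^2\le D^2G[w;h,h]\le \max\{\lambda^\parallel,\lambda^\perp\}\|h\|^2 .
\]
By the definitions of $m_\phi(R)$ and $L_\phi(R)$, this yields $m_\phi(R)\|h\|^2\le D^2G[w_t;h,h]\le L_\phi(R)\|h\|^2$. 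Integrating against $(1-t)\,dt$, whose total mass is $1/2$, gives both constants $m_\phi(R)/2$ and $L_\phi(R)/2$ exactly.

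The main obstacle is regularity at the origin. The Hessian formula is only valid for $w\neq0$, and the segment $w_t$ may pass through $0$, which happens exactly when $0$ lies on the segment $[v,u]$. I would handle this in one of two ways.

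\emph{Option (a): show $G\in C^2$ on all of $\mathcal H$.} Using $\phi'(0)=0$ and $\phi\in C^2$, we have $\phi'(r)/r\to\phi''(0)$ and $\phi''(r)-\phi'(r)/r\to0$ as $r\to0$. Hence the Hessian extends continuously to $\phi''(0)I$ at $w=0$, and the gradient $\phi'(\|w\|)w/\|w\|$ extends to $0$. One then checks directly that this extension is the Fréchet derivative. The bounds at $w=0$ are consistent with the convention $\lambda^\perp(0)=\lambda^\parallel(0)=\phi''(0)$.

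\emph{Option (b): avoid the origin by approximation.} If $0\in[v,u]$, the set of $t$ where $w_t=0$ is a single point, which has measure zero. It is simpler to use the one-dimensional function $\psi(t)=G(w_t)$. This function is $C^1$ on $[0,1]$, since the gradient is continuous, including at $0$ where it equals $0$ because $\phi'(0)=0$. It is also $C^2$ away from one point, and $\psi''$ is bounded there. The identity $d_G(u,v)=\psi(1)-\psi(0)-\psi'(0)=\int_0^1(1-t)\psi''(t)\,dt$ still holds when $\psi'$ is absolutely continuous. Alternatively, perturb $u,v$ slightly so that the segment misses $0$, and pass to the limit using continuity of $d_G$. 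The cost is that the perturbation may leave the radius-$R$ ball, so one should use radius $R+\varepsilon$ and invoke continuity of $m_\phi$ and $L_\phi$ in $R$. For this reason I prefer the absolute-continuity route.

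The hypothesis that $\phi$ is nondecreasing (with $\phi'\ge0$) ensures $\lambda^\perp\ge0$, so $m_\phi(R)\ge 0$ and the lower bound is meaningful. The final comparability claim is then immediate from the two bounds.
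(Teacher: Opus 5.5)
Your proposal is correct and follows essentially the same route as the paper: the integral Taylor remainder along the segment, the spectral bounds on the Hessian over the radius-$R$ ball, and $\int_0^1(1-t)\,dt=1/2$. Reducing first to $\mathcal H$ via Proposition~\ref{prop:kfbd}, rather than working along $p_t$ in $\mathcal F$, is only cosmetic. Your Option (a) also supplies the justification for the global $C^2$ regularity at the origin, which the paper simply asserts.
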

We show some special cases of closed-form values of $m_\phi(R)$ and $L_{\phi}(R)$ in Table~\ref{tab:examples}. 
To obtain similar sandwich bounds in finite dimensions, one typically needs upper and lower bounds on the eigenvalues of the Hessian of the generator. 
This could be achieved by taking a local subset, but this is problematic for a uniform bound. 
In the kernel setting, we get the radius $R$ ``for free'' using a bounded kernel (e.g. using a squared  exponential kernel). 
This yields uniform bounds on the eigenvalues.

\subsection{Other special cases}
Formally, let $G:\mathcal{H}\to\mathcal{H}$ and consider generators $F:\mathcal{H}\to\mathbb{R}$ of the form
\begin{align*}
    F\big(\mu(p)\big) &= \langle \mu(p), G\big(\mu(p)\big) \rangle_{\mathcal{H}} = \mathbb{E}_{x\sim p} \big[ G\big(\mu(p) \big) (x) \big] ,
\end{align*}
where the second equality is by the reproducing property of the kernel mean embedding. 
The first Fr\'echet gradient of $F$ is
$
    \nabla F(\mu) = G(\mu) +  \nabla G(\mu)^\ast \mu.
$
Hence we may further simplify the k-FBD, using the reproducing property of the kernel mean embedding, finding
\begin{align*}
     d_{F}\big( \mu(p), \mu(q) \big) 
    &= \mathbb{E}_{x\sim p}\!\left[ G(\mu(p))(x) \right]
       - \mathbb{E}_{x\sim p}\!\left[ G(\mu(q))(x) \right]
       - \mathbb{E}_{y\sim q}\!\left[ \nabla G(\mu(q))\big(\mu(p)-\mu(q)\big)(y) \right]. \numberthis \label{eq:special_case}
\end{align*}
Formally, some interesting special cases are
\begin{itemize}
    \item When $G$ is the identity, we recover the squared MMD, where $\nabla G(\mu) = \text{Id}$ and $\nabla^2 G (\mu) = 0$.
    \item When $G(\mu) = \int_{\mathcal{X}} \sigma\big( \mu(x) \big) k(x, \cdot) \, d\nu(x)$ for some $\nu$-integrable $\sigma:\mathbb{R}\to\mathbb{R}$ and $k$ is the bounded reproducing kernel of $\mathcal{H}$, we find, using the reproducing property,
    $$F(\mu) = \int \sigma\big( \mu(x) \big) \langle \mu, k(x,\cdot)  \rangle_{\mathcal{H}} \, d\nu(x) = \int_{\mathcal{X}} \sigma\big( \mu(x) \big) \mu(x) \, d\nu(x).$$ Hence we may recover a pointwise generator in $L^2(\nu)$, but on the kernelised density rather than the density itself. 
    \begin{itemize}
        \item When $G(\mu) = \sigma(\Vert \mu \Vert_{\mathcal{H}}) \mu,$ we recover the deformed squared MMD, as studied in \S~\ref{sec:deformed}, up to a reparameterisation $\sigma(\Vert \mu \Vert_{\mathcal{H}})\Vert \mu \Vert_{\mathcal{H}}^2 = \phi\big(\Vert \mu \Vert_{\mathcal{H}}\big).$
        \item Another example is a kernelised negative entropy generator, obtained with $\sigma = \log$.
    \end{itemize}
\end{itemize}
\section{Applications}

\subsection{Divergence estimation}
Given samples $x_1,\ldots,x_n \sim p$ and $y_1, \ldots y_m \sim q$ together with corresponding empirical distributions $\hat{p}_n$ and $\hat{q}_m$, we may form a naive Monte Carlo estimate of~\eqref{eq:special_case},
\begin{align*}
    \hat{d}_{\Phi}(p,q) = \frac{1}{n}\sum_{i=1}^n \Big( G\big( \mu(\hat{p}_n)\big)(x_i) - G\big( \mu(\hat{q}_m)\big)(x_i)  \Big) - \frac{1}{m}\sum_{j=1}^m \nabla G\big( \mu(\hat{q}_m) \big)\big( \mu(\hat{p}_n) - \mu(\hat{q}_m)\big)(y_j).
\end{align*}
Such estimators, if well-behaved, may in future be used to form hypothesis tests, noting that the estimator is biased and similar tests on biased statistics are performed on the squared MMD~\citep{gretton2006kernel}.

\subsection{Generative modelling}
Generative models can be trained by minimising an empirical divergence between samples generated by a deep neural network and a dataset of samples from a target distribution.
The squared MMD has been used in this capacity to build generative models for data such as images~\citep{li2015generative,li2017mmd}. 
The key property of the squared MMD here is that it can be estimated from data samples and generated samples. 
In principle, the k-FBD affords us alternative divergences to the squared MMD, opening the door for further model exploration.

\subsection{Universal and robust estimation}
With the sandwich inequalities in Theorem~\ref{thm:radial-bounds}, one can obtain a useful inequality for universal and robust estimation. 
Let $(p_\theta)_{\theta\in\Theta}$ be a statistical model on $\mathcal{X}$. 
Let $p_0$ be the density of the true data-generating distribution, which need not belong to the model.
Given observations $X_1,\dots,X_n \overset{iid}{\sim} p_0$, write
$\widehat p_n = n^{-1}\sum_{i=1}^n \delta_{X_i}$ for the empirical density.
Let $\sqrt{D}$ be a metric on a class of probability measures containing
$\{p_\theta:\theta\in\Theta\}\cup\{p_0,\widehat p_n\}$, and let $\sqrt{D'}$ be a nonnegative discrepancy on the same class such that
$\sqrt{m/2}\,\sqrt{D(P,Q)} \le \sqrt{D'(P,Q)} \le \sqrt{L/2}\,\sqrt{D(P,Q)}$
for all $P,Q$ in that class, for some constants $m,L>0$.
Define the minimum-divergence estimator by
\begin{equation}
\widehat\theta_n \in \operatorname*{argmin}_{\theta\in\Theta} \sqrt{D'(p_\theta,\widehat p_n)} = {\argmin}_{\theta\in\Theta} D'(p_\theta,\widehat p_n). \label{eq:estimator}
\end{equation}

\begin{restatable}{proposition}{universalestimationgeneral}
\label{prop:universal-metric}
For any $\theta\in\Theta$,
\[
\sqrt{D'(p_{\widehat\theta_n},p_0)}
\le
\frac{L}{\sqrt{2m}}\,\sqrt{D(p_\theta,p_0)}
+
\left(
\sqrt{\frac{L}{2}}+\frac{L}{\sqrt{2m}}
\right)\sqrt{D(\widehat p_n,p_0)}.
\]
Consequently,
\[
\mathbb E\!\left[\sqrt{D'(p_{\widehat\theta_n},p_0)}\right]
\le
\inf_{\theta\in\Theta}
\frac{L}{\sqrt{2m}}\,\sqrt{D(p_\theta,p_0)}
+
\left(
\sqrt{\frac{L}{2}}+\frac{L}{\sqrt{2m}}
\right)\mathbb E\!\left[\sqrt{D(\widehat p_n,p_0)}\right].
\]
\end{restatable}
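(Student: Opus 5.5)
The plan is to pass between $\sqrt{D'}$ and the metric $\sqrt{D}$ using the assumed sandwich $\sqrt{m/2}\,\sqrt{D}\le\sqrt{D'}\le\sqrt{L/2}\,\sqrt{D}$. The triangle inequality and symmetry are used only for the genuine metric $\sqrt{D}$. Write $d=\sqrt{D}$ and $d'=\sqrt{D'}$. We never need $d'$ to be symmetric or to satisfy a triangle inequality, since it is always immediately replaced by $d$ via the sandwich. This matters because a k-FBD such as the deformed squared MMD of Theorem~\ref{thm:radial-bounds} is in general not symmetric.

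\textbf{Step 1 (upper sandwich, then triangle inequality through $\widehat p_n$).} We bound
\[
d'(p_{\widehat\theta_n},p_0)\le \sqrt{L/2}\,d(p_{\widehat\theta_n},p_0)\le \sqrt{L/2}\,\bigl(d(p_{\widehat\theta_n},\widehat p_n)+d(\widehat p_n,p_0)\bigr).
\]

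\textbf{Step 2 (control $d(p_{\widehat\theta_n},\widehat p_n)$ via optimality of $\widehat\theta_n$).} Fix any $\theta\in\Theta$. By the lower sandwich, then the defining property~\eqref{eq:estimator}, which compares $D'$ with the same argument order $(p_\cdot,\widehat p_n)$, then the upper sandwich, then the triangle inequality and symmetry of $d$,
\[
d(p_{\widehat\theta_n},\widehat p_n)\le \sqrt{2/m}\,d'(p_{\widehat\theta_n},\widehat p_n)\le \sqrt{2/m}\,d'(p_\theta,\widehat p_n)\le \sqrt{L/m}\,d(p_\theta,\widehat p_n)\le \sqrt{L/m}\,\bigl(d(p_\theta,p_0)+d(\widehat p_n,p_0)\bigr).
\]

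\textbf{Step 3 (combine).} Substituting Step 2 into Step 1 and using $\sqrt{L/2}\sqrt{L/m}=L/\sqrt{2m}$ gives
\[
d'(p_{\widehat\theta_n},p_0)\le \frac{L}{\sqrt{2m}}\,d(p_\theta,p_0)+\Bigl(\sqrt{L/2}+\frac{L}{\sqrt{2m}}\Bigr)d(\widehat p_n,p_0).
\]
This is the first claim. The second follows by taking expectations, noting that $\theta$ is deterministic so the first term is nonrandom, and then taking the infimum over $\theta\in\Theta$.

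The only delicate point is bookkeeping rather than mathematics. The argument order must match in the optimality step, and every use of symmetry or of the triangle inequality must be applied to $d$, not $d'$. A minor technical caveat is that the expectation requires $\widehat\theta_n$ (or at least $d'(p_{\widehat\theta_n},p_0)$) to be measurable, and that a minimiser exists. I would state both as implicit assumptions, or note that an $\epsilon$-approximate minimiser adds only $\sqrt{2/m}\,\epsilon$ to the bound in Step 2.
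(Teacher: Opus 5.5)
Your proof is correct and follows essentially the same route as the paper. Both use the lower sandwich, the optimality of $\widehat\theta_n$ in the matching argument order, and the upper sandwich to get $\sqrt{D(p_{\widehat\theta_n},\widehat p_n)}\le\sqrt{L/m}\,\sqrt{D(p_\theta,\widehat p_n)}$, then apply the triangle inequality for $\sqrt{D}$ twice and the upper sandwich on $D'(p_{\widehat\theta_n},p_0)$; you apply that last step first rather than last, which is only a reordering, and your caveats on measurability and existence of the minimiser are sound additions the paper leaves implicit.
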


The remainder of the flow in~\cite{cherief2022finite} can then be taken. 
In particular, we can study parameter estimation under the setting of non-iid samples from the target distribution. 
In order to do so, by Proposition~\ref{prop:universal-metric}, it suffices to bound $\sqrt{D(\widehat p_n,p_0)}$ in expectation or with high probability. 
We replicate here the results in expectation, but a similar extension can be given for the results which hold with high probability. 
In order to do so, we need to use an appropriate measure of dependence, as in~\cite{cherief2022finite}.
\begin{assumption}
\label{ass:kernelised_cov}
    Define a kernelised covariance quantity
    $$\rho_i = \mathbb{E} \big\langle k(X_i,\cdot) - \mu(p_0), \, k(X_0,\cdot) - \mu(p_0) \big\rangle_{\mathcal{H}},$$
    and assume that there exists a $\rho$ such that for any $n \geq 1$, the sum $\sum_{i=1}^n \rho_i \leq \rho$.
    Moreover, assume that $k \leq 1$.
\end{assumption} It is instructive to consider the case when $k(X_i,\cdot)$ is uncorrelated with $k(X_0, \cdot)$ (for example, if $X_i$ is independent of $X_0$) which implies that $\rho _i = \rho = 0$.
\begin{restatable}{proposition}{universalestimation}
Consider the setting of Theorem~\ref{thm:radial-bounds} and consider the estimator~\eqref{eq:estimator} with $D' = d_\Phi$. 
    Under Assumption~\ref{ass:kernelised_cov}, for any model, we may bound the expected square root divergence between the estimator and the target density,
    $$ \mathbb E\!\left[\sqrt{d_{\Phi}(p_{\widehat\theta_n},p_0)}\right]
\le
\inf_{\theta\in\Theta}
\frac{L_\phi(R)}{\sqrt{2m_\phi(R)}}\,\sqrt{\mathrm{MMD}_k(p_\theta,p_0)}
+
\sqrt\frac{1+\rho}{n} \left(
\sqrt{\frac{L_\phi(R)}{2}}+\frac{L_\phi(R)}{\sqrt{2m_\phi(R)}}
\right).$$
\end{restatable}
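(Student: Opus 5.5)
The plan is to obtain this as a direct specialisation of Proposition~\ref{prop:universal-metric}, taking $D=\mathrm{MMD}_k^2$ and $D'=d_\Phi$, with the sandwich constants supplied by Theorem~\ref{thm:radial-bounds}. The remaining work is to bound the stochastic term $\mathbb E[\mathrm{MMD}_k(\widehat p_n,p_0)]$ under Assumption~\ref{ass:kernelised_cov}.

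\textbf{Step 1: verify the hypotheses of Proposition~\ref{prop:universal-metric}.}
\begin{itemize}
\item $\sqrt{D}=\mathrm{MMD}_k=\|\mu(\cdot)-\mu(\cdot)\|_{\mathcal H}$ is a (pseudo)metric on the relevant class of probability measures.
\item Since $k\le 1$, every probability measure $P$ satisfies $\|\mu(P)\|_{\mathcal H}^2=\mathbb E\,k(x,x')\le 1$. Hence the whole class $\{p_\theta\}\cup\{p_0,\widehat p_n\}$ lies in the KME ball $\mathcal P_R$ for any $R\ge 1$; take $R$ to be the radius fixed in Theorem~\ref{thm:radial-bounds}, with $R\ge 1$.
\item Theorem~\ref{thm:radial-bounds} then gives $\tfrac{m_\phi(R)}{2}\mathrm{MMD}_k^2\le d_\Phi\le \tfrac{L_\phi(R)}{2}\mathrm{MMD}_k^2$ on this class.
\item Taking square roots yields exactly the two-sided comparison required, with $m=m_\phi(R)$ and $L=L_\phi(R)$. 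We must assume $m_\phi(R)>0$ so that the constants are finite.
\end{itemize}
Proposition~\ref{prop:universal-metric} then gives
\[
\mathbb E\sqrt{d_\Phi(p_{\widehat\theta_n},p_0)}\le \inf_\theta \frac{L_\phi(R)}{\sqrt{2m_\phi(R)}}\,\mathrm{MMD}_k(p_\theta,p_0)+\Big(\sqrt{\tfrac{L_\phi(R)}{2}}+\tfrac{L_\phi(R)}{\sqrt{2m_\phi(R)}}\Big)\mathbb E\,\mathrm{MMD}_k(\widehat p_n,p_0).
\]
Here $\sqrt{D(p_\theta,p_0)}=\mathrm{MMD}_k(p_\theta,p_0)$, which is presumably what the $\sqrt{\mathrm{MMD}_k}$ in the statement is meant to denote. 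I would flag this and state the bound with $\mathrm{MMD}_k$.

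\textbf{Step 2: bound the stochastic term.} By Jensen, $\mathbb E\,\mathrm{MMD}_k(\widehat p_n,p_0)\le \big(\mathbb E\|\mu(\widehat p_n)-\mu(p_0)\|_{\mathcal H}^2\big)^{1/2}$. Write
\[
\mu(\widehat p_n)-\mu(p_0)=\frac1n\sum_{i=1}^n \xi_i,\qquad \xi_i=k(X_i,\cdot)-\mu(p_0),
\]
and expand the squared norm as $n^{-2}\sum_{i,j}\mathbb E\langle \xi_i,\xi_j\rangle_{\mathcal H}$.
\begin{itemize}
\item \emph{Diagonal terms.} Each satisfies $\mathbb E\|\xi_i\|^2=\mathbb E\,k(X_i,X_i)-\|\mu(p_0)\|^2\le 1$, using that each $X_i$ has marginal $p_0$ and $k\le1$.
\item \emph{Off-diagonal terms.} Following \cite{cherief2022finite}, interpret the process as stationary, so that $\mathbb E\langle\xi_i,\xi_j\rangle=\rho_{|i-j|}$. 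The off-diagonal sum is then at most $n\sum_{h\ge1}\rho_h\le n\rho$, up to the factor two from the symmetric pair $(i,j),(j,i)$.
\end{itemize}
This gives $\mathbb E\,\mathrm{MMD}_k^2(\widehat p_n,p_0)\le (1+\rho)/n$, modulo that constant. Inserting this into Step 1 gives the claimed inequality.

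\textbf{Main obstacle.} I expect Step 2 to be the delicate point. Assumption~\ref{ass:kernelised_cov} is phrased only in terms of $\rho_i$ relative to $X_0$, so I would need either to invoke stationarity explicitly, as in \cite{cherief2022finite}, or to define $\rho$ so that it already absorbs the symmetric off-diagonal contributions. Otherwise the constant comes out as $1+2\rho$ rather than $1+\rho$.

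I would first try to reproduce the counting in \cite{cherief2022finite} exactly and adopt their convention. If this does not yield $1+\rho$, I would report the constant as $1+2\rho$. Either way, the structure of the bound is unaffected. In the independent case ($\rho=0$) the bound reduces to the familiar $n^{-1/2}$ rate.
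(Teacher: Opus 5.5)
Your route is the same as the paper's. You specialise Proposition~\ref{prop:universal-metric} with $D=\mathrm{MMD}_k^2$, $D'=d_\Phi$ and the constants of Theorem~\ref{thm:radial-bounds}, then bound $\mathbb E[\mathrm{MMD}_k(\widehat p_n,p_0)]$ by Jensen and the expansion of $\|n^{-1}\sum_i\xi_i\|_{\mathcal H}^2$, with diagonal terms at most $1$ and off-diagonal terms identified with $\rho_{j-i}$. The paper relies on stationarity and $X_i\sim p_0$ at exactly these points, just as implicitly as you feared. Both discrepancies you flag are real, and you should not try to argue them away.

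The paper's own last display contains $\sqrt{D(p_\theta,p_0)}=\mathrm{MMD}_k(p_\theta,p_0)$, so $\sqrt{\mathrm{MMD}_k}$ in the statement is a typo. More importantly, the paper reaches $1+\rho$ only through its final step
\[
1+2\sum_{t=1}^{n-1}\Bigl(1-\frac tn\Bigr)\rho_t\;\le\;1+\rho ,
\]
which does not follow from Assumption~\ref{ass:kernelised_cov}. A $1$-dependent stationary process with $\rho_1=\rho>0$ and $\rho_t=0$ for $t\ge 2$ satisfies the assumption, yet the left side equals $1+2(1-1/n)\rho>1+\rho$ for $n\ge 3$. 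So under the assumption as written, this argument gives $\mathbb E[\mathrm{MMD}_k^2(\widehat p_n,p_0)]\le(1+2\rho)/n$. The $1+\rho$ form would require redefining $\rho$ to absorb the factor $2$, as you note. Stating the result with $\mathrm{MMD}_k(p_\theta,p_0)$ and $\sqrt{(1+2\rho)/n}$ is the correct conclusion, and there is no convention in \cite{cherief2022finite} to chase.

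Two small repairs to your write-up. First, the assumption bounds partial sums of possibly signed $\rho_t$, so replace ``at most $n\sum_h\rho_h$'' by Abel summation:
\[
\sum_{t=1}^{n-1}(n-t)\rho_t=\sum_{m=1}^{n-1}\sum_{t=1}^{m}\rho_t\le (n-1)\rho\le n\rho \quad (\rho\ge 0).
\]
This needs no sign condition on the individual $\rho_t$. Second, $\widehat p_n\notin\mathcal F$, so formally it is not in $\mathcal P_R$. This is harmless: by Proposition~\ref{prop:kfbd} you can read $d_\Phi$ as $d_F$ on embeddings. The sandwich proof only uses the segment $(1-t)\mu(q)+t\mu(p)$ in $\mathcal H$, and your observation $\|\mu(P)\|_{\mathcal H}\le 1$ keeps it inside the ball of radius $R\ge 1$.
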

We make two remarks as also in~\cite{cherief2022finite}. 
First, the estimator is valid universally for any statistical model.
Second, the estimator is robust to contamination:
when $p_0=(1-\epsilon)p_{\theta_0} + \epsilon Q$ for some small $0<\epsilon<1$, with $p_{\theta_0}$ some well-specified mixture component and $Q$ some contaminating mixture component, the infimum scales only linearly with $\epsilon$. 
\section{Discussion and conclusion}

\subsection{Related works}
\paragraph{Pointwise Bregman divergence.} 
\cite{jones2002general} consider the problem of reconstructing a function from finitely many moment constraints. 
They do this by minimising some pointwise divergence of the form $\int d(f(x), g(x)) \, dx$ to a prior function $g$ subject to the moment constraints on the model $f$, where $d$ is some function. 
They restrict their divergences locally to  be evaluated on functions $f,g$ which are admissible (bounded below from zero, bounded above and integrable). 
They show that a Pythagorean identity is satisfied at the minimiser if and only if the divergence is a pointwise Bregman divergence. 
This pointwise Bregman divergence possesses many of the properties of finite dimensional Bregman divergences: nonnegativity and identity of indiscernibles, the squared difference is the only symmetric divergence, convexity in the first argument, duality and a Fenchel-Young inequality. 
They also show that the mean minimises the expected divergence with respect to the second argument.
The problem of minimising a similar pointwise divergence is considered in~\cite{csiszar1995generalized}.

\paragraph{Functional Bregman divergences.}
Surprisingly, Bregman himself noted the squared loss as a functional Bregman divergence (FBD) in a Hilbert space in his original 1967 paper~\citep{Bregman-1967}, but no properties of this divergence, nor general functional divergences, are elaborated upon. 
An FBD is studied in~\cite{bauschke2003bregman} on a general Banach space. 
They show a three point identity, four point identity, and consider Legendre duality. 
FBDs are built in~\cite{J-FunctionalBD-2008} on a Banach space $L^r(\nu)$ using a twice Fr\'echet differentiable, strictly convex and real valued functional on $L^r(\nu)$. 
Duality is considered via the dual space $L^s(\nu)$ where $\frac{1}{r} + \frac{1}{s} = 1$. 
Properties derived include nonnegativity and identity of indiscernibles, convexity in the first argument, reverse divergence as a dual divergence, the mean minimises the expected divergence in the second argument, and a three point identity. 
Similarly, FBDs on a convex subset of the set of probability measures are built in~\cite{ovcharov2018proper} and functional (scaled) Bregman divergences on the set of all probability measures appear in~\cite{stummer2012}.
Other functional Bregman divergences \citep{harandi2014bregman} focus on extending specific divergences for finite-dimensional positive semidefinite matrices to operators in a Hilbert space, with applications in computer vision.

\paragraph{Canonical divergence.}
Finite-dimensional Bregman divergences are the canonical divergences~\cite{amari2016information} on dually flat spaces which are Hessian manifolds~\cite{shima2013geometry} with global charts.
A particularly important FBD is built on statistical manifold through an Orlicz space in~\cite{pistonesempi1995}. 
This construct mirrors the well-known finite dimensional exponential family model; the primal generator is the log partition function and it recovers the KL divergence. 
The challenge in extending the Orlicz space construct in~\cite{pistonesempi1995} to general Bregman generators is that the probability density is not affine in the exponential coordinate, and hence the composition of the convex generator with the probability density in general fails to be convex in the exponential coordinate. 


\paragraph{Applications.}
In Bayesian quadratic discriminant analysis, functional BD has been used as a loss on class-conditional densities, so that the posterior-mean density is optimal in expected functional-Bregman risk and the resulting distribution-based classifier minimises expected misclassification cost \citep{srivastava2007bayesian}. This allows for naturally regularised Bayesian QDA for high dimensional or small data regimes. 
In time-series classification, functional BD has been used to compare kernel ridge regression representations of entire trajectories, yielding the squared RKHS distance between posterior-mean curves as a functional-Bregman divergence \cite{lu2008reproducing}. 
In this context, the reproducing property yields a closed-form expression for the functional BD. 
Several other papers use the label ``functional Bregman divergence'' we would label a pointwise Bregman divergence (as in~\citet{jones2002general}, who also consider sparse reconstruction as an application) between posterior densities, mainly for Bayesian model diagnostics and influence analysis \cite{goh2014bayesian,danilevicz2022bayesian}. 
Other related distributed-estimation work on densities in sensor networks is better described as density-space mirror descent with Bregman-type regularisation rather than as Fr\'echet-type FBD \cite{paritosh2025distributed}.
Matrix Bregman divergences have been applied to problems such as low-rank kernel learning~\citep{kulis2009low} and portfolio selection~\citep{nock2012mining}. 


\subsection{Conclusion}
While other FBDs have been introduced before, none have made use of the machinery of the reproducing kernel Hilbert space, which is particularly well-suited to machine learning. 
We described some general properties of an FBD built on a Hilbert space, making use of the self-duality of a Hilbert space and the Riesz representation of the gradient. 
Further specialising, when the generator is composed with an embedding into a different RKHS, the resulting k-FBD is easy to estimate. 
The derived k-FBD generalises the squared MMD, and notable other cases are deformed squared MMD and a divergence induced by a kernelised negative entropy generator. 
We discussed applications, including the problem of universal and robust estimation, where we used a sandwich bound with the MMD to show that the deformed MMD is a universal and robust estimator. 

Several fundamental properties of our FBD remain open, including whether the mean-as-minimiser property implies that the divergence is an FBD. 
It would be interesting to further develop the tool of k-FBD for hypothesis testing, generative modelling, clustering, and other applications in machine learning and statistics.

\section*{Acknowledgments}
Russell Tsuchida is supported through an Australian Research Council DECRA (DE260100189). 
He would also like to thank Sony Computer Science Laboratories, Inc. Tokyo for allowing him to visit. 

\bibliographystyle{plainnat}
\bibliography{refs}

@book{amari2016information,
  title={Information geometry and its applications},
  author={Amari, Shun-ichi},
  year={2016},
  publisher={Springer}
}

@inproceedings{shima2013geometry,
  title={{Geometry of Hessian structures}},
  author={Shima, Hirohiko},
  booktitle={International Conference on Geometric Science of Information},
  pages={37--55},
  year={2013},
  organization={Springer}
}

@book{barndorff2014information,
  title={Information and exponential families},
  author={Barndorff-Nielsen, Ole},
  year={2014},
  publisher={John Wiley \& Sons}
}

@inproceedings{frongillo2014convex,
  title={{Convex foundations for generalized MaxEnt models}},
  author={Frongillo, Rafael and Reid, Mark D},
  booktitle={AIP Conference Proceedings},
  volume={1636},
  number={1},
  pages={11--16},
  year={2014},
  organization={American Institute of Physics}
}

@article{J-FunctionalBD-2008,
  title={{Functional Bregman divergence and Bayesian estimation of distributions}},
  author={Frigyik, B{\'e}la A and Srivastava, Santosh and Gupta, Maya R},
  journal={IEEE Transactions on Information Theory},
  volume={54},
  number={11},
  pages={5130--5139},
  year={2008},
  publisher={IEEE}
}

@article{Bregman-1967,
  title={The relaxation method of finding the common point of convex sets and its application to the solution of problems in convex programming},
  author={Bregman, Lev M.},
  journal={USSR Computational Mathematics and Mathematical Physics},
  volume={7},
  number={3},
  pages={200--217},
  year={1967},
  publisher={Elsevier},
  NFnote={$D$-projection, relaxation sequence, relaxation control (e.g., farthest projection). BD given in 1.4 with example of squared quadratic distance and ext KLD.
  Hilbert distance with weak topology.}
}

@book{bauschke2020,
  author={Bauschke, Heinz H and Combettes, Patrick L},
  title={{Convex Analysis and Monotone Operator Theory in Hilbert Spaces}},
  year={2020},
  publisher={Springer}
}

@inproceedings{acharyya2013bregman,
  title={Bregman divergences and triangle inequality},
  author={Acharyya, Sreangsu and Banerjee, Arindam and Boley, Daniel},
  booktitle={Proceedings of the 2013 SIAM International Conference on Data Mining},
  pages={476--484},
  year={2013},
  organization={SIAM}
}

@article{cherief2022finite,
  title={{Finite sample properties of parametric MMD estimation: robustness to misspecification and dependence}},
  author={Ch{\'e}rief-Abdellatif, Badr-Eddine and Alquier, Pierre},
  journal={Bernoulli},
  volume={28},
  number={1},
  pages={181--213},
  year={2022},
  publisher={Bernoulli Society for Mathematical Statistics and Probability}
}

@article{pistonesempi1995,
author = {Giovanni Pistone and Carlo Sempi},
title = {{An Infinite-Dimensional Geometric Structure on the Space of all the Probability Measures Equivalent to a Given One}},
volume = {23},
journal = {The Annals of Statistics},
number = {5},
publisher = {Institute of Mathematical Statistics},
pages = {1543 -- 1561},
year = {1995}
}

@article{ovcharov2018proper,
  title={{Proper scoring rules and Bregman divergence}},
  author={Ovcharov, Evgeni Y},
  journal={Bernoulli},
  pages={53--79},
  year={2018}
}

@ARTICLE{stummer2012,
  author={Stummer, Wolfgang and Vajda, Igor},
  journal={IEEE Transactions on Information Theory}, 
  title={{On Bregman Distances and Divergences of Probability Measures}}, 
  year={2012},
  volume={58},
  number={3},
  pages={1277-1288}}

@article{jones2002general,
  title={General entropy criteria for inverse problems, with applications to data compression, pattern classification, and cluster analysis},
  author={Jones, Lee K and Byrne, Charles L},
  journal={IEEE Transactions on Information Theory},
  volume={36},
  number={1},
  pages={23--30},
  year={2002},
  publisher={IEEE}
}

@inproceedings{csiszar1995generalized,
  title={Generalized projections for non-negative functions},
  author={Csisz{\'a}r, Imre},
  booktitle={Proceedings of 1995 IEEE International Symposium on Information Theory},
  pages={6},
  year={1995},
  organization={IEEE}
}

@article{bauschke2003bregman,
  title={Bregman monotone optimization algorithms},
  author={Bauschke, Heinz H and Borwein, Jonathan M and Combettes, Patrick L},
  journal={SIAM Journal on control and optimization},
  volume={42},
  number={2},
  pages={596--636},
  year={2003},
  publisher={SIAM}
}

@article{gretton2006kernel,
  title={A kernel method for the two-sample-problem},
  author={Gretton, Arthur and Borgwardt, Karsten and Rasch, Malte and Sch{\"o}lkopf, Bernhard and Smola, Alex},
  journal={Advances in neural information processing systems},
  volume={19},
  year={2006}
}

@article{
gretton2012kernel,
  title={A kernel two-sample test},
  author={Gretton, Arthur and Borgwardt, Karsten M and Rasch, Malte J and Sch{\"o}lkopf, Bernhard and Smola, Alexander},
  journal={The journal of machine learning research},
  volume={13},
  number={1},
  pages={723--773},
  year={2012}
}

@article{sriperumbudur2010hilbert,
  title={Hilbert space embeddings and metrics on probability measures},
  author={Sriperumbudur, Bharath K and Gretton, Arthur and Fukumizu, Kenji and Sch{\"o}lkopf, Bernhard and Lanckriet, Gert RG},
  journal={The Journal of Machine Learning Research},
  volume={11},
  pages={1517--1561},
  year={2010}
}

@inproceedings{lu2008reproducing,
  title={{A reproducing kernel Hilbert space framework for pairwise time series distances}},
  author={Lu, Zhengdong and Leen, Todd K and Huang, Yonghong and Erdogmus, Deniz},
  booktitle={Proceedings of the 25th international conference on Machine learning},
  pages={624--631},
  year={2008},
  NFnote={mention functional BD and GP},
  russnote={functional BD for timeseries. Convert timeseries data to GP regression mean/krr, then compute squared distance in RKHS. Reduction via reproducing property.}
}

@article{paritosh2025distributed,
  title={{Distributed Bayesian Estimation in Sensor Networks: Consensus on Marginal Densities}},
  author={Paritosh, Parth and Atanasov, Nikolay and Mart{\'\i}nez, Sonia},
  journal={IEEE Transactions on Network Science and Engineering},
  year={2025},
  publisher={IEEE},
  NFnote={functional BD},
  russnote={Considers POINTWISE Bregman divergence (in particular, the special case of KL divergence) trying to infer a distribution over unobserved latent variables in distributed sensor networks (e.g. true temperature distribution given a collection of measurements from different sensors)}
}

@article{goh2014bayesian,
  title={{Bayesian model diagnostics using functional Bregman divergence}},
  author={Goh, Gyuhyeong and Dey, Dipak K},
  journal={Journal of Multivariate Analysis},
  volume={124},
  pages={371--383},
  year={2014},
  publisher={Elsevier},
  russnote={POINTWISE Bregman divergence for measuring sensitivity of posterior to changes in prior and likelihood, for iid data samples.}
}

@article{danilevicz2022bayesian,
  title={Bayesian influence diagnostics using normalized functional Bregman divergence},
  author={Danilevicz, Ian M and Ehlers, Ricardo S},
  journal={Communications in Statistics-Theory and Methods},
  volume={51},
  number={6},
  pages={1637--1652},
  year={2022},
  publisher={Taylor \& Francis},
  NFnote={pointwise BD},
  russnote={POINTWISE BD extending Goh et al. to non-iid data. Also normalised to give easier diagnostics.}
}

@article{srivastava2007bayesian,
  title={Bayesian quadratic discriminant analysis},
  author={Srivastava, Santosh and Gupta, Maya R and Frigyik, B{\'e}la A},
  journal={Journal of Machine Learning Research},
  volume={8},
  number={6},
  year={2007},
  russnotes={mean as minimiser property used. The estimated class density is chosen to minimize posterior expected Bregman divergence over distributions. the resulting distribution-based classifier is shown to minimize expected misclassification cost. Motivated by ill-posed covariance estimation in high-dimensional, small-sample classification. This extends QDA to a Bayesian setting, allowing natural probabilistic regression in ill-posed settings (e.g. high dimensions and not many samples).}
}

@article{banerjee2005,
  author  = {Arindam Banerjee and Srujana Merugu and Inderjit S. Dhillon and Joydeep Ghosh},
  title   = {{Clustering with Bregman Divergences}},
  journal = {Journal of Machine Learning Research},
  year    = {2005},
  volume  = {6},
  number  = {58},
  pages   = {1705--1749}
}

@article{blondel2020learning,
  title={{Learning with Fenchel-Young losses}},
  author={Blondel, Mathieu and Martins, Andr{\'e} FT and Niculae, Vlad},
  journal={Journal of Machine Learning Research},
  volume={21},
  number={35},
  pages={1--69},
  year={2020}
}

@article{nielsen2011burbea,
  title={{The Burbea-Rao and Bhattacharyya centroids}},
  author={Nielsen, Frank and Boltz, Sylvain},
  journal={IEEE Transactions on Information Theory},
  volume={57},
  number={8},
  pages={5455--5466},
  year={2011},
  publisher={IEEE}
}

@article{banerjee2005optimality,
  title={{On the optimality of conditional expectation as a Bregman predictor}},
  author={Banerjee, Arindam and Guo, Xin and Wang, Hui},
  journal={IEEE Transactions on Information Theory},
  volume={51},
  number={7},
  pages={2664--2669},
  year={2005},
  publisher={IEEE}
}

@article{schoenberg1938metric,
  title={Metric spaces and positive definite functions},
  author={Schoenberg, Isaac J},
  journal={Transactions of the American Mathematical Society},
  volume={44},
  number={3},
  pages={522--536},
  year={1938}
}

@inproceedings{harandi2014bregman,
  title={Bregman divergences for infinite dimensional covariance matrices},
  author={Harandi, Mehrtash and Salzmann, Mathieu and Porikli, Fatih},
  booktitle={Proceedings of the IEEE Conference on Computer Vision and Pattern Recognition},
  pages={1003--1010},
  year={2014}
}

@article{nielsen2020elementary,
  title={An elementary introduction to information geometry},
  author={Nielsen, Frank},
  journal={Entropy},
  volume={22},
  number={10},
  pages={1100},
  year={2020}
}

@article{collins2002logistic,
  title={{Logistic regression, AdaBoost and Bregman distances}},
  author={Collins, Michael and Schapire, Robert E and Singer, Yoram},
  journal={Machine Learning},
  volume={48},
  number={1},
  pages={253--285},
  year={2002},
  publisher={Springer}
}

@article{collins2001generalization,
  title={A generalization of principal components analysis to the exponential family},
  author={Collins, Michael and Dasgupta, Sanjoy and Schapire, Robert E},
  journal={Advances in neural information processing systems},
  volume={14},
  year={2001}
}

@article{wild2022generalized,
  title={{Generalized variational inference in function spaces: Gaussian measures meet Bayesian deep learning}},
  author={Wild, Veit David and Hu, Robert and Sejdinovic, Dino},
  journal={Advances in Neural Information Processing Systems},
  volume={35},
  pages={3716--3730},
  year={2022}
}

@article{beck2003mirror,
  title={Mirror descent and nonlinear projected subgradient methods for convex optimization},
  author={Beck, Amir and Teboulle, Marc},
  journal={Operations Research Letters},
  volume={31},
  number={3},
  pages={167--175},
  year={2003},
  publisher={Elsevier}
}

@book{ScholkopfSmola2002,
  author    = {Bernhard Sch{\"o}lkopf and Alexander J. Smola},
  title     = {Learning with Kernels: Support Vector Machines, Regularization, Optimization, and Beyond},
  publisher = {MIT Press},
  address   = {Cambridge, MA},
  year      = {2002}
}

@article{knoblauch2019generalized,
  title={Generalized variational inference: Three arguments for deriving new posteriors},
  author={Knoblauch, Jeremias and Jewson, Jack and Damoulas, Theodoros},
  journal={arXiv preprint arXiv:1904.02063},
  year={2019}
}

@book{RasmussenWilliams2006,
  author    = {Carl Edward Rasmussen and Christopher K. I. Williams},
  title     = {Gaussian Processes for Machine Learning},
  publisher = {MIT Press},
  address   = {Cambridge, MA},
  year      = {2006}
}

@article{arbel2019maximum,
  title={Maximum mean discrepancy gradient flow},
  author={Arbel, Michael and Korba, Anna and Salim, Adil and Gretton, Arthur},
  journal={Advances in neural information processing systems},
  volume={32},
  year={2019}
}

@inproceedings{songscore,
  title={Score-Based Generative Modeling through Stochastic Differential Equations},
  author={Song, Yang and Sohl-Dickstein, Jascha and Kingma, Diederik P and Kumar, Abhishek and Ermon, Stefano and Poole, Ben},
  booktitle={International Conference on Learning Representations},
  year={2021}
}

@inproceedings{elkan2003using,
  title={Using the triangle inequality to accelerate k-means},
  author={Elkan, Charles},
  booktitle={Proceedings of the 20th international conference on Machine Learning},
  pages={147--153},
  year={2003}
}

@inproceedings{li2015generative,
  title={Generative moment matching networks},
  author={Li, Yujia and Swersky, Kevin and Zemel, Rich},
  booktitle={International conference on machine learning},
  pages={1718--1727},
  year={2015}
}

@article{li2017mmd,
  title={{MMD GAN: Towards deeper understanding of moment matching network}},
  author={Li, Chun-Liang and Chang, Wei-Cheng and Cheng, Yu and Yang, Yiming and P{\'o}czos, Barnab{\'a}s},
  journal={Advances in neural information processing systems},
  year={2017}
}

@article{LegendreType-1967,
   title={{Conjugates and Legendre transforms of convex functions}},
   author={Rockafellar, R. T.},
   journal={Canadian Journal of Mathematics},
   volume={19},
   pages={200--205},
   year={1967},
   publisher={Cambridge University Press},
     NFnote={introduces Legendre-type function, counter example for
non-convex domain of a Legendre transform of a convex function}
}

@article{azoury2001relative,
   title={Relative loss bounds for on-line density estimation with the
exponential family of distributions},
   author={Azoury, Katy S and Warmuth, Manfred K},
   journal={Machine learning},
   volume={43},
   pages={211--246},
   year={2001},
   publisher={Springer}
}

@article{kulis2009low,
   title={{Low-rank kernel learning with Bregman matrix divergences}},
   author={Kulis, Brian and Sustik, M{\'a}ty{\'a}s A and Dhillon,
Inderjit S},
   journal={Journal of Machine Learning Research},
   volume={10},
   pages={341--376},
   year={2009}
}

@incollection{nock2012mining,
   title={{Mining matrix data with Bregman matrix divergences for
portfolio selection}},
   author={Nock, Richard and Magdalou, Brice and Briys, Eric and
Nielsen, Frank},
   booktitle={Matrix Information Geometry},
   pages={373--402},
   year={2012},
   publisher={Springer}
}

@article{benamou2015iterative,
   title={{Iterative Bregman Projections for Regularized Transportation
Problems}},
   author={Benamou, Jean-David and Carlier, Guillaume and Cuturi, Marco
and Nenna, Luca and Peyre, Gabriel},
   journal={SIAM Journal on Scientific Computing},
   volume={37},
   number={2},
   pages={A1111--A1138},
   year={2015},
   publisher={Society for Industrial and Applied Mathematics},
     NFnote={}
}

@article{raskutti2015information,
   title={The information geometry of mirror descent},
   author={Raskutti, Garvesh and Mukherjee, Sayan},
   journal={IEEE Transactions on Information Theory},
   volume={61},
   number={3},
   pages={1451--1457},
   year={2015},
   publisher={IEEE}
}

@article{ProximalGenBD-1997,
   title={{Proximal minimization methods with generalized Bregman
functions}},
   author={Kiwiel, Krzysztof C},
   journal={SIAM journal on control and optimization},
   volume={35},
   number={4},
   pages={1142--1168},
   year={1997},
   publisher={SIAM}
}
\clearpage

\appendix
\section{Fr\'echet derivatives and gradients}
\label{app:frechet}
In order to construct Bregman divergences on Hilbert spaces, we require an appropriate notion of differentiability of real-valued functionals. For our purposes, Fr\'echet derivatives suffice. We expose some basics on Fr\'echet derivatives here, and refer the reader to~\cite[Definition 2.56]{bauschke2020} for extended discussion (noting also that Fr\'echet differentiability is a stronger notion than Gateaux differentibility). 
We first recall a standard definition.

\begin{definition}
\label{def:frechet}
Let $\mathcal E$ be a real Hilbert space, let $U\subseteq \mathcal E$ be open, and let $T:U\to \mathbb R$. We say that $T$ is \emph{Fr\'echet differentiable} at $u\in U$ if there exists a bounded linear functional $DT(u):\mathcal E\to\mathbb R$ such that
\[
\lim_{\|h\|_{\mathcal E}\to 0}
\frac{|T(u+h)-T(u)-DT(u)[h]|}{\|h\|_{\mathcal E}}=0.
\]
Since $\mathcal E$ is Hilbert, the Riesz representation theorem yields a unique element $\nabla T(u)\in\mathcal E$ such that
\[
DT(u)[h]=\langle h,\nabla T(u)\rangle_{\mathcal E}
\qquad \text{for all } h\in\mathcal E.
\]
This element is called the \emph{Fr\'echet gradient} of $T$ at $u$.
\end{definition}

\subsection{Examples in Hilbert space}
\label{app:frechet_hilbert}
One way in which Definition~\ref{def:frechet} can be used for calculation is to expand $T$ up to second order perturbations, as the following example shows. 
\begin{example}[Squared norm]
\label{example:frechet_squared_norm}
Consider the functional $T:\mathcal H\to \mathbb R$ given by $T(u)=\|u\|_{\mathcal H}^2$. Expanding $T(u+\varepsilon h)$ gives $T(u+\varepsilon h)=\|u\|_{\mathcal H}^2+2\varepsilon\langle u,h\rangle_{\mathcal H}+O(\varepsilon^2)$, and therefore
\[
DT(u)[h]=2\langle u,h\rangle_{\mathcal H},
\qquad
\nabla T(u)=2u.
\]
\end{example}
The chain rule takes the form $D(R\circ T)(u)[h]=DR(T(u))[DT(u)[h]]$ whenever the composition is well-defined and the maps are Fr\'echet differentiable. 
The following example uses the chain rule.
\begin{example}[Radial functional]
\label{example:frechet_radial}
Let $\phi:[0,\infty)\to \mathbb R$ be differentiable, and define $T:\mathcal H\setminus\{0\}\to\mathbb R$ by $T(u)=\phi(\|u\|_{\mathcal H})$. By the chain rule, $DT(u)[h]=\phi'(\|u\|_{\mathcal H})\,\langle u,h\rangle_{\mathcal H}/\|u\|_{\mathcal H}$, so
\[
\nabla T(u)=\frac{\phi'(\|u\|_{\mathcal H})}{\|u\|_{\mathcal H}}\,u.
\]
\end{example}
Fr\'echet gradients extend to higher orders, a helpful example being the Hessian.
\begin{example}[Hessian of a radial functional]
\label{example:frechet_hessian}
Let $\phi:[0,\infty)\to\mathbb R$ be twice differentiable on $(0,\infty)$, and let $T(u)=\phi(\|u\|_{\mathcal H})$ for $u\in\mathcal H\setminus\{0\}$. Writing $a(r)=\phi'(r)/r$, we have $\nabla T(u)=a(\|u\|_{\mathcal H})u$. Using $\|u+\varepsilon h\|_{\mathcal H}=\|u\|_{\mathcal H}+\varepsilon\langle u/\|u\|_{\mathcal H},h\rangle_{\mathcal H}+O(\varepsilon^2)$, one obtains
\[
\nabla^2T(u)[h]
=
\frac{\phi'(\|u\|_{\mathcal H})}{\|u\|_{\mathcal H}}\,h
+
\frac{\phi''(\|u\|_{\mathcal H})\|u\|_{\mathcal H}-\phi'(\|u\|_{\mathcal H})}{\|u\|_{\mathcal H}^3}
\langle u,h\rangle_{\mathcal H}\,u.
\]
Equivalently, if $P_u^\parallel=(u\otimes u)/\|u\|_{\mathcal H}^2$ and $P_u^\perp=I_{\mathcal H}-P_u^\parallel$, then
\[
\nabla^2T(u)
=
\frac{\phi'(\|u\|_{\mathcal H})}{\|u\|_{\mathcal H}}\,P_u^\perp
+
\phi''(\|u\|_{\mathcal H})\,P_u^\parallel.
\]
Thus the Hessian has the spectral decomposition $\nabla^2T(u)=\lambda_u^\perp P_u^\perp+\lambda_u^\parallel P_u^\parallel$, where $\lambda_u^\perp=\phi'(\|u\|_{\mathcal H})/\|u\|_{\mathcal H}$ and $\lambda_u^\parallel=\phi''(\|u\|_{\mathcal H})$.
\end{example}

\subsection{Examples with kernel mean embedding}
\label{app:frechet_kme}
We take an operator $A:\mathcal{F} \to \mathbb{R}$ corresponding with the norm of the kernel mean embedding $\mu(g)$,
\begin{align*}
    \mu(g) = \int k(\cdot, y) g(y) \, \nu(dy), \qquad A(g) = \Vert \mu(g) \Vert_{\mathcal{H}}.
\end{align*}
This implies $\Vert A(g) \Vert_{\mathcal{H}} = \int \int k(x,y) g(x) g(y) \nu (dx) \nu(dy)$. 
Note that $\mu$ is a linear operator with an adjoint $\mu^\ast:\mathcal{H}\to\mathcal{F}$, which is defined to be an operator satisfying~\citep[Equation (2.21)]{bauschke2020} $$\langle \mu(g), h \rangle_{\mathcal{H}} = \langle g, \mu^\ast(h) \rangle_{\mathcal{F}}.$$

We consider functionals formed via composition with the norm of the kernel mean embedding of the form $\Phi(g)=\phi\big(A(g)\big)$, examining the gradients using the chain rule and the examples in  Appendix~\ref{app:frechet_hilbert}. 
In particular,
\begin{example}[Radial functional]
\label{example:frechet_radial_composed}
Let $\phi:[0,\infty)\to \mathbb R$ be differentiable, and define $\Phi:\mathcal{F}\setminus\{0\}\to\mathbb R$ by $\Phi(g)=\phi\big(A(g)\big)$. By the chain rule, 
\begin{align*}
D\Phi(g)[h]&= D(\phi\circ A)(g)[h]\\
&=D\phi(A(g))[DA(g)[h]]\\
&=\phi'\big(A (g) \big) \Big[ \frac{1}{2 A(g) } \mu^\ast \mu (g) \big[ h\big]\Big] \\
&=\phi'\big(A (g) \big)  \frac{1}{2 A(g) } \langle \mu^\ast \mu (g), h \rangle_{\mathcal{F}} \\
&= \phi'\big(A (g) \big)  \frac{1}{2 A(g) } \langle \mu (g), \mu(h) \rangle_{\mathcal{H}},
\end{align*}
where the second last line follows by the Riesz representation theorem. 
Hence it is easiest to interpret the Fr\'echet gradient as an operator acting on the kernel mean embedding in $\mathcal{H}$, rather than on the original space $\mathcal{F}$.
\end{example}


\section{Special cases of k-FBD}
\label{app:special_cases}
Let $G:\mathcal{H}\to\mathcal{H}$ and consider generators $F:\mathcal{H}\to\mathbb{R}$ of the form
\begin{align*}
    F\big(\mu(p)\big) &= \langle \mu(p), G\big(\mu(p)\big) \rangle_{\mathcal{H}} = \mathbb{E}_{x\sim p} \big[ G\big(\mu(p) \big) (x) \big] ,
\end{align*}
where the second equality is by the reproducing property of the kernel mean embedding. 
Formally, the first and second Fr\'echet gradients of $F$ are
\begin{align*}
    \nabla F(\mu) &= G(\mu) +  \nabla G(\mu)^\ast \mu  \\
    \nabla^2 F(\mu)[h,g]  &= \langle h, \nabla G(\mu)g \rangle_{\mathcal{H}} + \langle g, \nabla G(\mu) h \rangle_{\mathcal{H}} + \langle \mu,  \nabla^2 G(\mu)[h,g] \rangle_{\mathcal{H}}.
\end{align*}
Hence the k-FBD is, using the reproducing property of the kernel mean embedding, 
\begin{align*}
    d_{F}\big( \mu(p), \mu(q) \big)
    &= F\big( \mu(p) \big) - F\big( \mu(q) \big)
       - \big\langle \nabla F\big( \mu(q) \big), \mu(p) - \mu(q) \big\rangle_{\mathcal{H}} \\
    &= \langle \mu(p), G(\mu(p)) \rangle_{\mathcal H}
       - \langle \mu(q), G(\mu(q)) \rangle_{\mathcal H}
       - \langle G(\mu(q)), \mu(p) - \mu(q) \rangle_{\mathcal H} 
       - \big\langle \mu(q), \nabla G(\mu(q))\big(\mu(p) - \mu(q)\big) \big\rangle_{\mathcal H} \\
    &= \mathbb{E}_{x\sim p}\!\left[ G(\mu(p))(x) \right]
       - \mathbb{E}_{x\sim p}\!\left[ G(\mu(q))(x) \right]
       - \mathbb{E}_{y\sim q}\!\left[ \nabla G(\mu(q))\big(\mu(p)-\mu(q)\big)(y) \right].
\end{align*}
Formally, some interesting special cases are
\begin{itemize}
    \item When $G$ is the identity, we recover the squared MMD, and $\nabla G(\mu) = \text{Id}$ and $\nabla^2 G (\mu) = 0$.
    \item When $G(\mu) = \sigma(\Vert \mu \Vert_{\mathcal{H}}) \mu,$ we recover the deformed squared MMD, as studied in \S~\ref{sec:deformed}, up to a reparameterisation $\sigma(\Vert \mu \Vert_{\mathcal{H}})\Vert \mu \Vert_{\mathcal{H}}^2 = \phi\big(\Vert \mu \Vert_{\mathcal{H}}\big).$
    \item When $G(\mu) = \int_{\mathcal{X}} \sigma\big( \mu(x) \big) k(x, \cdot) \, d\nu(x)$ for some $\nu$-integrable $\sigma:\mathbb{R}\to\mathbb{R}$ and $k$ is the bounded reproducing kernel of $\mathcal{H}$, we find, using the reproducing property,
    $$F(\mu) = \int \sigma\big( \mu(x) \big) \langle \mu, k(x,\cdot)  \rangle_{\mathcal{H}} \, d\nu(x) = \int_{\mathcal{X}} \sigma\big( \mu(x) \big) \mu(x) \, d\nu(x).$$ Hence we may recover a pointwise generator in $L^2(\nu)$. 
    A prominent example is a kernelised negative entropy generator, obtained with $\sigma = \log$.
\end{itemize}

\section{Bochner means in a Hilbert space}
\label{app:bochner}
This appendix very briefly touches on the notion of expectation for random elements taking values in a Hilbert space. 
For a more detailed background on Banach-space-valued random variables and the Bochner integral.

\paragraph{Probability space and approximation by simple functions.} 
Let $(\Omega,\Sigma,\mathbb{P})$ be a probability space, and let $\mathcal{F}$ be a real separable Hilbert space with inner product $\langle \cdot,\cdot\rangle_{\mathcal{F}}$ and norm $\|\cdot\|_{\mathcal{F}}$. Since every Hilbert space is, in particular, a Banach space, the standard theory of Bochner integration applies to $\mathcal{F}$-valued random elements.
A random element in $\mathcal{F}$ is a measurable map $X \colon \Omega \to \mathcal{F}$, where $\mathcal{F}$ is equipped with its Borel $\sigma$-algebra. 
A simple $\mathcal{F}$-valued random element $S$ is one which can be written as a finite linear combination of random evaluations of indicator functions multiplied by some deterministic elements of $\mathcal{F}$, that is, $S(\omega)=\sum_{i=1}^n f_i \mathbf{1}_{A_i}(\omega)$, where $f_1,\dots,f_n \in \mathcal{F}$ and $A_1,\dots,A_n \in \Sigma$ are measurable. Its expectation is defined by $\mathbb{E}[S]=\sum_{i=1}^n \mathbb{P}(A_i)f_i$.
More generally, if $X$ is measurable and satisfies $\mathbb{E}\|X\|_{\mathcal{F}}<\infty$, then $X$ is Bochner integrable, and its expectation $\mathbb{E}[X]\in\mathcal{F}$ is defined as its Bochner integral. Equivalently, one may define $\mathbb{E}[X]$ by approximating $X$ in $L^1(\Omega;\mathcal{F})$ by simple random elements and passing to the limit. 

\paragraph{Inner products of Bochner means in Hilbert space.} 
In our current setting of a Hilbert space, the Bochner mean is characterised by testing against fixed elements of the Hilbert space through inner product evaluations. 
Specifically, if $X$ is Bochner integrable, then for every $h\in\mathcal{F}$,
\begin{equation}
\label{eq:bochner-pairing}
\bigl\langle \mathbb{E}[X],h \bigr\rangle_{\mathcal{F}}
=
\mathbb{E}\bigl[\langle X,h\rangle_{\mathcal{F}}\bigr].
\end{equation}
(The scalar random variable $\langle X,h\rangle_{\mathcal{F}}$ is integrable by Cauchy Schwarz, since $|\langle X,h\rangle_{\mathcal{F}}|\leq \|X\|_{\mathcal{F}}\|h\|_{\mathcal{F}}$ and $\mathbb{E}\|X\|_{\mathcal{F}}<\infty$).
Equation \eqref{eq:bochner-pairing} is the Hilbert space analogue of the familiar scalar identity defining ordinary expectation. It shows that $\mathbb{E}[X]$ is the unique element of $\mathcal{F}$ whose inner product against any test direction $h$ agrees with the expectation of the corresponding scalar projection of $X$.

As in the scalar case, the Bochner integral is linear: if $X$ and $Y$ are Bochner integrable and $a,b\in\mathbb{R}$, then $\mathbb{E}[aX+bY]=a\,\mathbb{E}[X]+b\,\mathbb{E}[Y]$. More generally, if $T\colon\mathcal{F}\to\mathcal{F}$ is bounded and linear and $X$ is Bochner integrable, then $T(\mathbb{E}[X])=\mathbb{E}[T(X)]$. Taking $T(\cdot)=\langle \cdot,h\rangle_{\mathcal{F}}$ recovers \eqref{eq:bochner-pairing}.
The only identity actually needed is the special case of \eqref{eq:bochner-pairing} in which the first slot is deterministic: if $u\in\mathcal{F}$ is fixed and $X$ is Bochner integrable, then
\[
\mathbb{E}\bigl[\langle u,X\rangle_{\mathcal{F}}\bigr]
=
\bigl\langle u,\mathbb{E}[X]\bigr\rangle_{\mathcal{F}}.
\]
Hence, if $\bar f:=\mathbb{E}[X]$, then for any fixed $g\in\mathcal{F}$ one has $\mathbb{E}[\langle u,X-g\rangle_{\mathcal{F}}]=\langle u,\bar f-g\rangle_{\mathcal{F}}$. This is exactly the step used to replace a random inner-product term by a deterministic one in the bias--variance decomposition underlying the mean-as-minimiser result.

\section{Proofs}
\label{app:proofs}

\propbasic*
\begin{proof}
We prove each of the claims separately.
    \begin{enumerate}
        \item Follows the same reasoning as in~\cite{J-FunctionalBD-2008} for $L^r$ space, which also works for Hilbert spaces, and which we review here for completeness. Define $\widetilde{\Phi}:\mathbb{R}\to\mathbb{R}$ as $\widetilde{\Phi}(t)=\Phi\big(t f + (1-t) g \big)$, which is convex. 
        By the mean value theorem and the fact that $\widetilde{\Phi}'$ is increasing, $\widetilde{\Phi}(1) - \widetilde{\Phi}(0) \geq \widetilde{\Phi}'(0)$, i.e. $\Phi(t) - \Phi(g) \geq \langle \nabla \Phi(g), f - g \rangle_{\mathcal{F}}$, proving nonnegativity. It is clear that $d_{\Phi}[f,g]=0$ if $f=g$. For the reverse, zero FBD implies that $\widetilde{\Phi}(1) - \widetilde{\Phi}(0) = \widetilde{\Phi}'(0)$. Since $\widetilde{\Phi}$ is strictly convex, if $f\neq g$, then $\widetilde{\Phi}(t) < (1-t) \widetilde{\Phi}(0) + t \widetilde{\Phi}(1)= \widetilde{\Phi}(0) + t \widetilde{\Phi}'(0)$. But on the other hand, taking the tangent at $0$, convexity implies the opposite, $\widetilde{\Phi}(t) \geq \widetilde{\Phi}(0) + t \widetilde{\Phi}'(0)$. Hence it cannot be that $f \neq g$.
        \item With the second argument $q$ fixed, $d_{\Phi}(p,q)$ is the sum of strictly convex $\Phi$, a linear inner product and a constant. 
        Hence it is strictly convex.
        \item This is a direct calculation.
        \item This is also a direct calculation, as follows.
        \begin{align*}
            d_{\Phi}(f,g) - d_{\Phi}(f,h) &= \Phi(h) - \Phi(g) - \langle \nabla \Phi(g), f-g \rangle_{\mathcal{F}} + \langle \nabla \Phi(h), f-h \rangle_{\mathcal{F}} \\
            &= \Phi(h) - \Phi(g) - \langle \nabla \Phi(g), h-g + (f - h)\rangle_{\mathcal{F}} + \langle \nabla \Phi(h), f-h \rangle_{\mathcal{F}} \\
            &= d_{\Phi}(h,g) - \langle \nabla\Phi(g) - \nabla\Phi(h), f - h  \rangle_{\mathcal{F}}
        \end{align*}
    \end{enumerate}
\end{proof}

\quasimean*
\begin{proof}
Define $Z=\nabla\Phi(F)$. By Legendre duality for FBDs, for all $g\in \mathcal{U}$ and all $f\in \mathcal{U}$,
\[
d_\Phi(g,f) \;=\; d_{\Phi^\ast}\bigl(\nabla\Phi(f),\,\nabla\Phi(g)\bigr).
\]
Applying this with $f=F$ yields, almost surely,
\[
d_\Phi(g,F) \;=\; d_{\Phi^\ast}\bigl(Z,\,\nabla\Phi(g)\bigr),
\]
and therefore
\[
\mathbb E[d_\Phi(g,F)] \;=\; \mathbb E\Bigl[d_{\Phi^\ast}\bigl(Z,\,\nabla\Phi(g)\bigr)\Bigr].
\]
Now set $v=\nabla\Phi(g)$. Since $\nabla\Phi:\mathcal{U}\to V$ is a bijection, minimising over $g\in \mathcal{U}$ is equivalent to
minimising over $v\in V$,
\[
\argmin_{g\in \mathcal{U}}\; \mathbb E[d_\Phi(g,F)]
\;=\;
(\nabla\Phi)^{-1}\!\left(
\argmin_{v\in V}\; \mathbb E\bigl[d_{\Phi^\ast}(Z,v)\bigr]
\right).
\]
Hence 
the unique minimiser of $v\mapsto \mathbb E[d_{\Phi^\ast}(Z,v)]$ is $\bar z=\mathbb E[Z]$.
Hence the unique minimiser in primal coordinates is
\[
g^\star \;=\; (\nabla\Phi)^{-1}(\bar z)\;=\;\nabla\Phi^\ast(\bar z)
\;=\;\nabla\Phi^\ast\!\left(\mathbb E[\nabla\Phi(F)]\right).
\]
\end{proof}

\symmetric*
\begin{proof}
$(2)\Rightarrow(3)$ is immediate, since $\nabla\Phi(f)=Tf+b$, and therefore
$d_\Phi(f,g)=\frac12\langle f-g,T(f-g)\rangle_F$. $(3)\Rightarrow(1)$ is also immediate. 
It remains to prove $(1)\Rightarrow(2)$. Symmetry gives
\[
2(\Phi(f)-\Phi(g))=\langle \nabla\Phi(f)+\nabla\Phi(g),\,f-g\rangle_{\mathcal{F}}
\qquad (f,g\in \mathcal{A}).
\]
Differentiating with respect to $f$, for all $h\in \mathcal{F}$ we find 
\[
\langle \nabla\Phi(f)-\nabla\Phi(g),h\rangle_{\mathcal{F}}
=
\langle D^2\Phi(f)h,\,f-g\rangle_{\mathcal{F}}
\qquad (f,g\in \mathcal{A},\ h\in \mathcal{F}),
\]
hence
\[
\nabla\Phi(f)-\nabla\Phi(g)=D^2\Phi(f)(f-g).
\]
Now we differentiate this identity with respect to $g$. 
For all $k\in \mathcal{F}$, the left-hand side gives $-D^2\Phi(g)k$, while the right-hand side gives $-D^2\Phi(f)k$, so $D^2\Phi(f)=D^2\Phi(g)$ for all $f,g\in \mathcal{A}$. 
Thus $D^2\Phi(\cdot)\equiv T$ is constant on $\mathcal{A}$. 
Since each Hessian is self-adjoint and positive, $T$ is self-adjoint and positive.  strict convexity implies $T$ is strictly positive.

Fix $f_0\in A$. From $\nabla\Phi(f)-\nabla\Phi(f_0)=T(f-f_0)$ we obtain
$\nabla\Phi(f)=Tf+b$ with $b=\nabla\Phi(f_0)-Tf_0$. Therefore
$\nabla\!\bigl(\Phi(f)-\frac12\langle f,Tf\rangle_F-\langle b,f\rangle_F\bigr)=0$ on $\mathcal{A}$, so, since $\mathcal{A}$ is convex and hence connected, this quantity is constant on $\mathcal{A}$. Thus
\[
\Phi(f)=\frac12\langle f,Tf\rangle_{\mathcal{F}}+\langle b,f\rangle_{\mathcal{F}}+c
\]
for some $c\in\mathbb{R}$, proving $(2)$. \qedhere
\end{proof}

\kFBD*
\begin{proof}
First, there exists a unique Fr\'echet gradient $\nabla F(z) \in \mathcal{H}$ such that the Fr\'echet derivative $DF(z):\mathcal{H} \to \mathbb{R}$ satisfies $DF(z)[h] = \langle \nabla F(z), h \rangle_{\mathcal{H}}$. 
Second, the chain rule implies that similarly, the Fr\'echet derivatives and gradients of $\Phi$ satisfy
$$D \Phi(f)[h] = DF(\mu(f))[\mu(h)] = \langle \nabla F(\mu(f)), \mu(h) \rangle_{\mathcal{H}}.$$
Using the adjoint $\mu^\ast:\mathcal{H}\to\mathcal{F}$ (see Appendix~\ref{app:frechet}) this then implies that
$$ \nabla \Phi(f) = \mu^\ast \Big(\nabla F\big(\mu(f)\big)\Big) \in \mathcal{F}.$$
Hence using the Riesz representer and the adjoint, we may write the FBD as
    \begin{align*}
        d_\Phi(f,g) &= \Phi(f) - \Phi(g) - \langle \nabla \Phi(g),  f-g \rangle_{\mathcal{F}}\\
        &= F(\mu(f)) - F(\mu(g)) - \langle \nabla F(\mu(g)), \mu(f) - \mu(g) \rangle_{\mathcal H} \\
        &= d_F(\mu(f), \mu(g)).
    \end{align*}
\end{proof}

\sandwich*
\begin{proof}
Fix $p,q\in \mathcal P_R$, and define the line segment $p_t=(1-t)q+tp$ for $t\in[0,1]$. Since $\mathcal{F}$ is convex, we have $p_t\in \mathcal{F}$ for all $t$. Moreover, because the kernel mean embedding is affine,
$\mu(p_t)=(1-t)\mu(q)+t\mu(p)$, and hence
$\|\mu(p_t)\|_{\mathcal H}\le (1-t)\|\mu(q)\|_{\mathcal H}+t\|\mu(p)\|_{\mathcal H}\le R$.
Thus $p_t\in \mathcal P_R$ for all $t\in[0,1]$.

Now set $\gamma(t)=\Phi(p_t)$. Since $\phi\in C^2([0,\infty))$ with $\phi'(0)=0$, the map $u\mapsto \phi(\|u\|_{\mathcal H})$ is $C^2$ on $\mathcal H$, so $\gamma\in C^2([0,1])$. By the integral remainder formula for FBDs,
\[
D_{\Phi}(p,q)
=
\int_0^1 (1-t)\,\gamma''(t)\,dt
=
\int_0^1 (1-t)\,D^2\Phi(p_t)[p-q,p-q]\,dt.
\]
At each $p_t$, the Hessian of $\Phi$ acts on the direction $p-q$ through the embedded difference $\mu(p)-\mu(q)$, and its eigenvalues lie between $m_\phi(R)$ and $L_\phi(R)$ because $\|\mu(p_t)\|_{\mathcal H}\le R$. Therefore
\[
m_\phi(R)\,\|\mu(p)-\mu(q)\|_{\mathcal H}^2
\;\le\;
D^2\Phi(p_t)[p-q,p-q]
\;\le\;
L_\phi(R)\,\|\mu(p)-\mu(q)\|_{\mathcal H}^2
\]
for all $t\in[0,1]$. Integrating against $(1-t)\,dt$ and using $\int_0^1(1-t)\,dt=1/2$ yields
\[
\frac{m_\phi(R)}{2}\,\|\mu(p)-\mu(q)\|_{\mathcal H}^2
\;\le\;
D_{\Phi}(p,q)
\;\le\;
\frac{L_\phi(R)}{2}\,\|\mu(p)-\mu(q)\|_{\mathcal H}^2.
\]
Since $\mathrm{MMD}_k^2(p,q)=\|\mu(p)-\mu(q)\|_{\mathcal H}^2$, the result follows.
\end{proof}

\universalestimationgeneral*
\begin{proof}
By definition of $\widehat\theta_n$,
$\sqrt{D_\Phi(p_{\widehat\theta_n},\widehat p_n)}\le \sqrt{D_\Phi(p_\theta,\widehat p_n)}$ for every $\theta\in\Theta$.
Using the comparison between $D_\Phi$ and $D$, this implies
\[
\sqrt{\frac{m}{2}}\,\sqrt{D(p_{\widehat\theta_n},\widehat p_n)}
\le
\sqrt{D_\Phi(p_{\widehat\theta_n},\widehat p_n)}
\le
\sqrt{D_\Phi(p_\theta,\widehat p_n)}
\le
\sqrt{\frac{L}{2}}\,\sqrt{D(p_\theta,\widehat p_n)},
\]
and hence
$\sqrt{D(p_{\widehat\theta_n},\widehat p_n)}\le \sqrt{L/m}\,\sqrt{D(p_\theta,\widehat p_n)}$.
Applying the triangle inequality for $D$ twice gives
\[
\sqrt{D(p_{\widehat\theta_n},p_0)}
\le
\sqrt{D(p_{\widehat\theta_n},\widehat p_n)}+\sqrt{D(\widehat p_n,p_0)}
\le
\sqrt{\frac{L}{m}}\,\sqrt{D(p_\theta,p_0)}
+
\left(1+\sqrt{\frac{L}{m}}\right)\sqrt{D(\widehat p_n,p_0)}.
\]
Finally, using the upper bound $\sqrt{D_\Phi(p_{\widehat\theta_n},p_0)}\le \sqrt{L/2}\,\sqrt{D(p_{\widehat\theta_n},p_0)}$ yields
\[
\sqrt{D_\Phi(p_{\widehat\theta_n},p_0)}
\le
\frac{L}{\sqrt{2m}}\,\sqrt{D(p_\theta,p_0)}
+
\left(
\sqrt{\frac{L}{2}}+\frac{L}{\sqrt{2m}}
\right)\sqrt{D(\widehat p_n,p_0)}.
\]
Taking the infimum over $\theta$ and then expectations gives the second claim.
\end{proof}

\universalestimation*
\allowdisplaybreaks
\begin{proof}
The inequalities follow~\cite{cherief2022finite}. 
Under Assumption~\ref{ass:kernelised_cov}, we have
\begin{align*}
\mathbb{E}\!\left[\sqrt{D(\widehat p_n,p_0)}\right] &\leq \sqrt{\mathbb{E}\!\left[D(\widehat p_n,p_0)\right]} \\
&= \sqrt{\mathbb{E} \Big\Vert \frac{1}{n} \sum_{i=1}^n k(x_i, \cdot) - \mu(p_0)  \Big\Vert_{\mathcal{H}}^2}  \\
&= \sqrt{\frac{1}{n^2}  \Bigg(\mathbb{E}   \sum_{i=1}^n \Big\Vert k(x_i, \cdot) - \mu(p_0)  \Big\Vert_{\mathcal{H}}^2 + 2 \sum_{1 \leq i < j \leq n} \big\langle k(x_i,\cdot) - \mu(p_0),  k(x_j,\cdot) - \mu(p_0) \big\rangle_{\mathcal{H}} \Bigg)} \\
&\leq \sqrt{\frac{1}{n^2}  \Bigg(\mathbb{E}   \sum_{i=1}^n \Big\Vert k(x_i, \cdot)  \Big\Vert_{\mathcal{H}}^2 + 2 \sum_{1 \leq i < j \leq n} \big\langle k(x_i,\cdot) - \mu(p_0),  k(x_j,\cdot) - \mu(p_0) \big\rangle_{\mathcal{H}} \Bigg)} \\
&= \sqrt{\frac{1}{n^2}  \Bigg(\mathbb{E}   \sum_{i=1}^n \Big\Vert k(x_i, \cdot)  \Big\Vert_{\mathcal{H}}^2 + 2 \sum_{1 \leq i < j \leq n} \rho_{j-i} \Bigg)} \\
&\leq \sqrt{\frac{1}{n^2}  \Bigg(n + 2 \sum_{1 \leq i < j \leq n} \rho_{j-i} \Bigg)}\\
&= \sqrt{\frac{1}{n^2}  \Bigg(n + 2  \sum_{t=1}^{n-1} (n-t) \rho_t \Bigg)} \\
&= \frac{1}{\sqrt n} \sqrt{ 1 + 2\sum_{t=1}^{n-1}(1 - t/n) \rho_t } \\
&\leq \frac{1}{\sqrt n} \sqrt{1 + \rho}
\end{align*}
where the fourth inequality uses $\text{Var}(Z) \leq \mathbb{E}(Z^2)$. 
Proposition~\ref{prop:universal-metric} then gives 
$$ \mathbb E\!\left[\sqrt{D_\Phi(p_{\widehat\theta_n},p_0)}\right]
\le
\inf_{\theta\in\Theta}
\frac{L}{\sqrt{2m}}\,\sqrt{D(p_\theta,p_0)}
+
\sqrt\frac{1+\rho}{n} \left(
\sqrt{\frac{L}{2}}+\frac{L}{\sqrt{2m}}
\right).$$
\end{proof}

\end{document}